\def\eqref#1{equation~\ref{#1}}
\def\1{\bm{1}}
\DeclareMathAlphabet{\mathsfit}{\encodingdefault}{\sfdefault}{m}{sl}
\SetMathAlphabet{\mathsfit}{bold}{\encodingdefault}{\sfdefault}{bx}{n}
\newcommand{\R}{\mathbb{R}}
\definecolor{mygreen}{rgb}{0,0.6,0}
\definecolor{mygray}{rgb}{0.5,0.5,0.5}
\definecolor{mymauve}{rgb}{0.58,0,0.82}
\tiny\color{mygray}, 
\def \Re {\mathbb{R}}
\def \Diag {\mathrm{Diag}}
\theoremstyle{plain}
\newtheorem{corollary}{Corollary}
\newtheorem{theorem}{Theorem}
\theoremstyle{remark}
\theoremstyle{definition}
\def\R{\mathbb{R}}
\def\Z{\mathbf{Z}}
\def\P{\mathbf{\Pi}}
\def\p{\boldsymbol{\pi}}
\def\X{\mathbf{X}}
\def\x{\mathbf{x}}
\def\y{\mathbf{y}}
\def\z{\mathbf{z}}
\def\I{\mathbf{I}}
\def\0{\mathbf{0}}
\def\1{\mathbf{1}}
\def\M{\mathbf{M}}
\def\U{\mathbf{U}}
\def\v{\mathbf{v}}
\def\A{\mathbf{A}}
\def\a{\mathbf{a}}
\def\D{\mathbf{D}}
\def\W{\mathbf{W}}
\def\Q{\mathbf{Q}}
\def\u{\mathbf{u}}
\def\Lam{\mathbf{\Lambda}}
\def\Perm{\mathbf{P}}
\def\rank{\operatorname{rank}}
\def\image{\operatorname{image}}
\def\orthogonal{\mathrm{O}}
\def\PSD{\mathrm{PSD}}
\def\order{\mathcal{O}}
\def\variational{\mathrm{var}}
\def\mcr2{MCR$^2$}
\newcommand{\TSSA}[1]{\operatorname{\texttt{TSSA}}(#1)}
\newcommand{\ours}{\textsc{ToST}}
\newcommand{\vit}{ViT}
\newcommand{\xcit}{XCiT}
\newcommand{\crate}{\textsc{CRATE}}
\title{Token Statistics  Transformer: Linear-Time Attention via Variational Rate Reduction}
\author{%
    Ziyang Wu \\ UC Berkeley
    \And
    Tianjiao Ding$^{*}$ \\ UPenn
    \And
    Yifu Lu\thanks{Equal contribution} \\ UMich
    \And
    Druv Pai \\ UC Berkeley
    \And
    Jingyuan Zhang \\ THU \& Transcengram
    \And
    \AND
    Weida Wang \\ Tsinghua SIGS
    \And
    Yaodong Yu \\ UC Berkeley
    \And
    Yi Ma \\ UC Berkeley \& HKU 
    \And
    Benjamin D. Haeffele \\ JHU
}
\begin{document}

\maketitle

\begin{abstract}
The attention operator is arguably the key distinguishing factor of transformer architectures, which have demonstrated state-of-the-art performance on a variety of tasks. However, transformer attention operators often impose a significant computational burden, with the computational complexity scaling quadratically with the number of tokens. In this work, we propose a novel transformer attention operator whose computational complexity scales linearly with the number of tokens. We derive our network architecture by extending prior work which has shown that a transformer style architecture naturally arises by ``white-box'' architecture design, where each layer of the network is designed to implement an incremental optimization step of a maximal coding rate reduction objective (\mcr2). Specifically, we derive a novel variational form of the \mcr2 objective and show that the architecture that results from unrolled gradient descent of this variational objective leads to a new attention module called Token Statistics Self-Attention (\texttt{TSSA}). \texttt{TSSA} has {\em linear computational and memory complexity} and radically departs from the typical attention architecture that computes pairwise similarities between tokens. Experiments on vision, language, and long sequence tasks show that simply swapping \texttt{TSSA} for standard self-attention, which we refer to as the Token Statistics Transformer (\ours{}), achieves competitive performance with conventional transformers while being significantly more computationally efficient and interpretable.  Our results also somewhat call into question the conventional wisdom that pairwise similarity style attention mechanisms are critical to the success of transformer architectures. Code will be available at \url{https://github.com/RobinWu218/ToST}.
\end{abstract}

\vspace{-0.2cm}
\section{Motivation}
\vspace{-0.3cm}
Transformer architectures 
have led to state-of-the-art performance across many applications in machine learning, computer vision, natural language processing, and elsewhere \citep{vaswani_attention_2017,devlin_bert_2019,radford2018improving,radford2019language,brown_language_2020,chen_generative_2020,dosovitskiy2020image}. Arguably, the defining component of transformers is the \textit{attention operator}, which was originally motivated to allow for handling long-range interactions among data tokens (e.g., image patches, words, video frames).  Attention can come in a variety of forms, but transformer architectures typically employ \textit{self-attention} \citep{vaswani_attention_2017}. %
In particular, the core aspect of the self-attention operator is \textit{scaled dot product attention}, which generates output tokens by taking a weighted average of the input tokens, where the weights are computed via a ``similarity'' between pairs of input tokens 
(i.e., the scaled dot product between pairs of tokens after multiplication by the ``key'' and ``query'' parameter matrices).  
While the success of transformers (and by extension self-attention) in a wide variety of applications demonstrates the utility of the approach, it comes with a potentially significant drawback: it requires one to compute a similarity between all pairs of input tokens, which results in its {\em computation and memory complexity scaling  quadratically} with the number of input tokens. Indeed, the computational demands of self-attention operators are well-noted in the literature, and a number of techniques have been proposed to allow for more efficient computation. Examples include partitioning the tokens into blocks and having each attention head only compute with one subset of the tokens \citep{qiu-etal-2020-blockwise}, computing attention over various sliding windows of tokens \citep{beltagy2020longformer, liu2021swin}, finding a suitable low-rank projection of the tokens \citep{wang2020linformer}, or approximating the pairwise attention computation via the Nystrom extension \citep{xiong2021nystromformer}.  Likewise, \cite{ali2021xcit}  propose to calculate scaled dot products between feature channels rather than between tokens, noting relationships between the eigenspaces of Gram matrices and correlation matrices, and dub their method the cross-covariance transformer (\xcit). In this paper, we will also propose an efficient attention operator, but we will derive our method from a radically different approach which suggests that computing (or approximating) pairwise similarities between tokens may not be necessary at all. 
To build an intuition for this notion it is useful to consider the core operation of the attention operator, which has been noted by multiple authors in the literature \citep{chaudhari2021attentive, zhang2021dive, vidal2022attention} as performing a form of kernel regression \citep{nadaraya1964estimating, watson1964smooth} with close connections to classical denoising techniques such as non-local means or block-matching and 3D filtering (BM3D) \citep{vidal2022attention, buades2005non, dabov2006image}. Namely, output tokens are formed by taking a weighted average of input tokens which are deemed to be ``similar'' (as measured by the learned scaled dot product metric).  More abstractly, this suggests that the self-attention operator is a particular case of a more general class of operators which produces outputs based on statistics computed from the input tokens. In other words, in the case of self-attention we compute weighted averages based on the learned scaled dot product metric, but one could envision producing outputs based on other statistics of the input tokens which are more efficiently computed.

\begin{wrapfigure}{r}{0.6\textwidth}
\vspace{-0.5cm}
\centering
\includegraphics[width=0.29\textwidth]{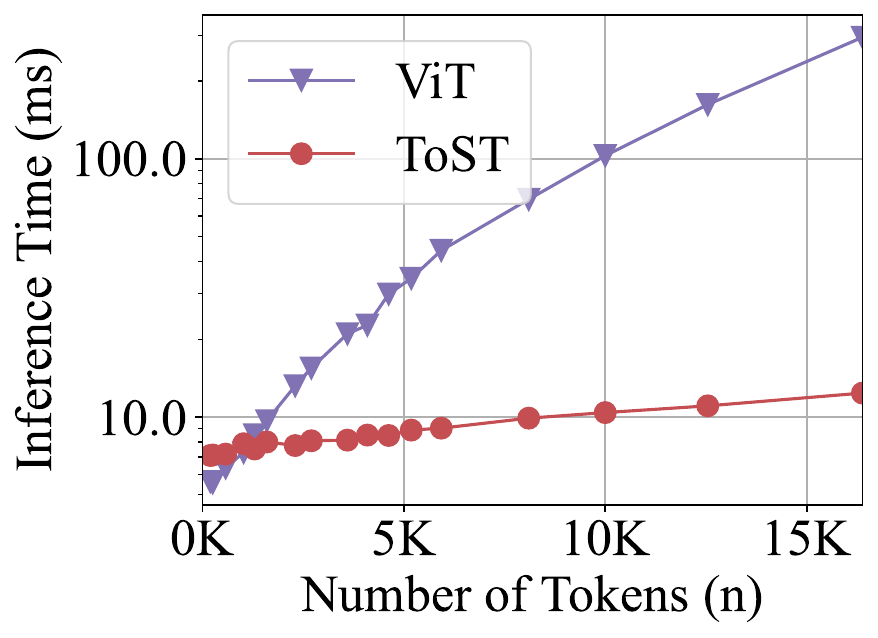}
\includegraphics[width=0.29\textwidth]{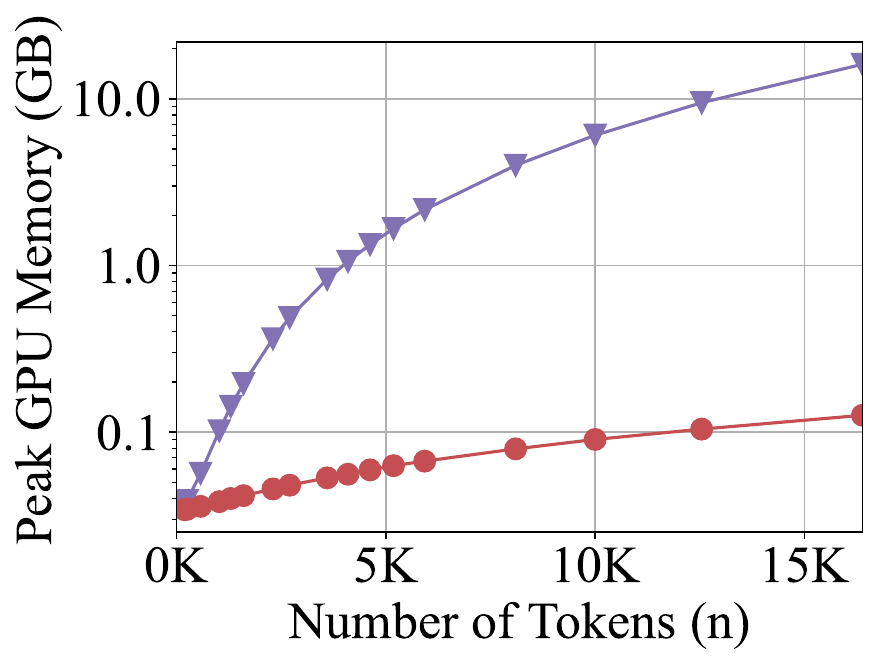}
\caption{\small Our \ours{} architecture, from unrolling a novel variational form of \mcr2, \textbf{is faster and uses less memory} than standard transformer architectures such as \vit{} (note the log-scale y-axes) and is based on a dramatically different notion of attention.
}
    \label{fig:compute_intro}
\vspace{-0.3cm}
\end{wrapfigure}

Here, to derive our proposed method we will leverage ``white-box'' architecture design (also referred to as \textit{algorithmic unrolling}), which takes the philosophy that the operations of the layers of a neural network can be interpreted as performing incremental updates to optimize an objective function (\cite{gregor2010learning}; cf. \cite{monga_algorithm_2021} and its references). In particular, the structure of the architecture is defined by the form of an update that a specific optimization method (e.g., gradient descent/ascent) employs, but the parameters of the resulting update are still left to be learned\footnote{Note that the objective function used to derive the architecture and the training objective used to learn the model parameters need not be the same.}.  Notably, recent work \citep{yu_whitebox_2023, yu2024white} has argued that transformer-style architectures arise naturally from white-box design when incrementally updating an objective function based on the principle of maximum coding rate reduction (\mcr2). While this provides interesting interpretations of transformer architectures, the resulting white-box architecture is largely identical to a transformer, so it still suffers from the same computational challenges described above.
Nevertheless, here we draw inspiration from this prior work and make the following contributions:

\begin{enumerate}[leftmargin=*,itemsep=0pt,topsep=-2pt]
  \item We provide a novel variational form for the \mcr2 objective which additionally allows one to upper-bound certain spectral functions of large matrices, which may be of independent interest.
  \item Using white-box architecture design to derive an incremental optimization of our variational objective, we propose a novel attention operator which is highly efficient and only requires {\em linear complexity} to compute (see \Cref{fig:compute_intro} comparing our method with \vit{} \citep{dosovitskiy2020image}).
  \item Our resulting architecture is radically different from standard transformers in that it does not compute/approximate pairwise similarities between input tokens. Instead, it only computes a data-dependent low-rank projection based on an empirical second moment statistic of the input token features. This leads us to name it the {\em Token Statistics Transformer} (\ours{}) (see \Cref{fig:vcrate-architecture}).
  \item We validate our approach experimentally and show that our proposed architecture achieves comparable performance to standard transformer architectures while having a significantly reduced computation and memory footprint, particularly for large numbers of high-dimensional tokens.  Notably, replacing self-attention with our attention operator typically at least maintains performance on benchmark tasks and often improves performance for tasks with large token lengths. 
\end{enumerate}

\vspace{-0.2cm}
\section{Background and Preliminaries}
\vspace{-0.3cm}
\paragraph{Notation.} We define a few notations that will be used throughout the rest of the paper. For a positive integer \(n\), let \([n] \doteq \{1, 2, \dots, n\}\). 
Let \(\0\) (resp.~\(\1\)) be the vector/matrix of all zeros (resp.~all ones). Let \(\I\) be the identity matrix. For \(\0\), \(\1\), and \(\I\), the dimension is inferred unambiguously from context.  
For a matrix \(\A \in \R^{m \times n}\), we denote by \(\a_{i} \in \R^{m}\) the \(i^{\mathrm{th}}\) column of \(\A\), and by \(A_{ij} \in \R\) the \((i, j)^{\mathrm{th}}\) entry of \(\A\). 
For a vector \(\v \in \R^{n}\), let \(\Diag(\v) \in \R^{n \times n}\) be a diagonal matrix with the entries of \(\v\) along the diagonal.
For a matrix \(\A \in \R^{m \times n}\) we denote by \(\A^{\odot 2} \in \R^{m \times n}\) the element-wise square of \(\A\). For a vector \(\v \in \R^{n}\) and a function \(f \colon \R \to \R\), let \(f[\v] \in \R^{n}\) be the element-wise application of \(f\) to the entries of \(\v\). For \(m \geq n\), we denote by \(\orthogonal(m, n) \subseteq \R^{m \times n}\) the set of \(m \times n\) matrices with orthonormal columns; consequently, we denote by \(\orthogonal(n) \doteq \orthogonal(n, n)\) the set of \(n \times n\) orthogonal matrices. We denote by \(\PSD(n) \subseteq \R^{n \times n}\) the set of \(n \times n\) positive semi-definite (PSD) matrices. For \(\M \in \PSD(n)\) and \(i \in [n]\), we denote by \(\lambda_{i}(\M)\) the \(i^{\mathrm{th}}\) largest eigenvalue of \(\M\). %
\vspace{-2mm}
\subsection{Representation Learning via Maximal Coding Rate Reduction}
\vspace{-2mm}

Before presenting the development of our attention operator, we first review core ideas that lead to its derivation. Real data, such as images, videos, and text, are often assumed to be realizations of some random variable \(\X\) drawn from some high-dimensional probability distribution which has an underlying rich low-dimensional structure \citep{fefferman2016testing,wright2022high}. Commonly, preprocessing methods \textit{tokenize} the data, converting each sample into a list of vectors (i.e. tokens) which typically represent a small and local part of the data (e.g., a word or piece of a word, a patch of an image).  As a result, for applications involving transformers, \(\X = [\x_{1}, \dots, \x_{n}] \in \R^{D \times n}\) is typically a matrix-valued variable, and the transformer seeks to find a \textit{representation mapping}, or a mapping of the tokens $\phi\colon \R^{D \times n} \to \R^{d \times n}$ such that the features $\Z = \phi(\X) = [\z_1, \ldots, \z_n] \in \R^{d \times n}$ are appropriate for the task at hand\footnote{Here, with some abuse of notation, we will use $\Z$ to refer to the the token features at some intermediate layer (or point) within the network, not necessarily just the final features at the output of the network. We also pack the tokens into the columns of a matrix, rather than the rows as is often done in the transformer literature, as we believe it leads to a cleaner presentation.}. While many works focus primarily on how well $\Z$ can be used to solve a particular task (e.g., classification or segmentation), recent work \citep{yu2020learning,chan2022redunet,ding_unsupervised_2023,chu_image_2024} has proposed a more task-agnostic approach to learning a suitable representation which, rather than focusing on a task-specific metric, instead seeks to find the underlying low-dimensional structure in the data via an objective based on the principle of maximal coding rate reduction (\mcr2).  More specifically, the tokens are assumed to belong to an underlying set of \(K\) groups, which may (for example) represent different modes within the population of tokens, and the \mcr2 objective seeks to find token features of each group which are \textit{compressed} within a group while the set of all token features is as expansive as possible.  In particular, let $\P = [\p_1, \ldots, \p_K] \in \R^{n \times K}$ be a stochastic ``group assignment'' matrix (i.e., $\P \1 = \1$ and $\Pi_{ik} \geq 0, \  \forall (i,k) \in [n] \times [K]$ ) that assigns a group membership probability to each token (i.e., the $i^\text{th}$ row of $\P$ is a probability vector for group membership for the $i^\text{th}$ token).  Then, letting $\epsilon > 0$ and $n_k \doteq \langle \p_k, \1 \rangle$ for each \(k \in [K]\), the \mcr2 objective, denoted as $\Delta R(\Z,\P)$, is:
\vspace{-1mm}
\begin{equation}
    \label{eq:mcr2} 
    \Delta R(\Z,\P) \doteq \underbrace{\frac{1}{2} \log \det \left( \I + \frac{d}{\epsilon^{2}} \frac{1}{n} \Z \Z^{\top} \right)}_{\doteq R(\Z)} - \underbrace{\frac{1}{2} \sum_{k=1}^{K} \frac{n_{k}}{n} \log \det \left( \I + \frac{d}{\epsilon^{2}} \frac{1}{n_k}\Z \Diag(\p_k) \Z^{\top} \right) }_{\doteq R_{c}(\Z, \P)}.
\end{equation}

 \begin{figure}
     \centering
     \includegraphics[width=0.9\textwidth]{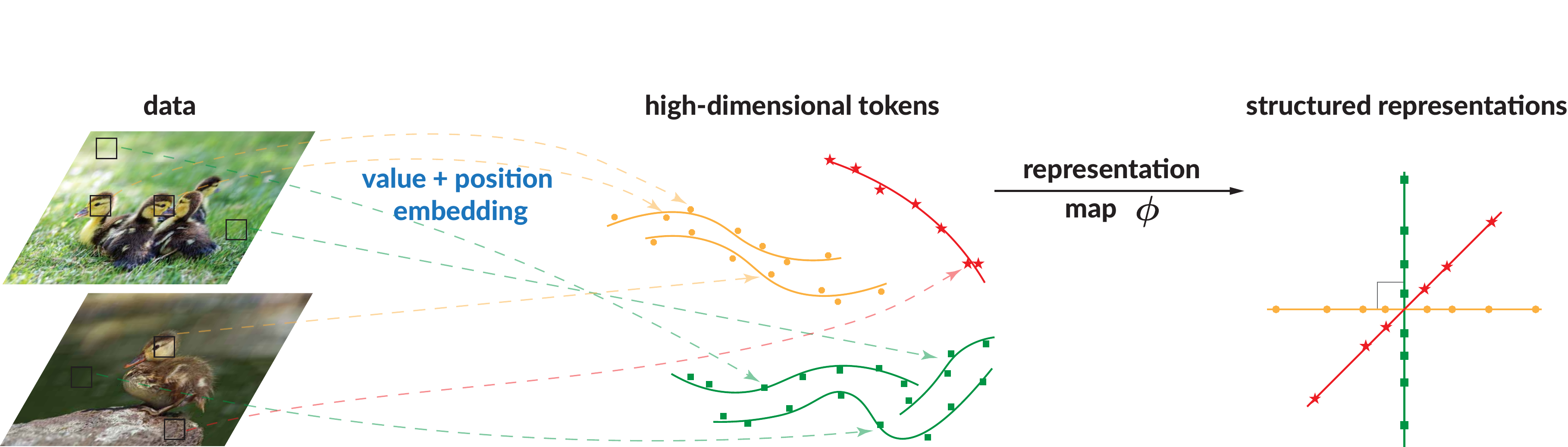}
     \caption{\small \textbf{Tokenization and representation for image data.} Data (images) are split up into tokens (patches) $\X$, which share semantics with other tokens from the same or different samples. Tokens with similar semantics may belong to the same geometric structures in the original space and be grouped together by $\P$. A learned  mapping $\phi$ converts these tokens into features which are compressed, linearized, and discriminative.     \vspace{-3mm}}%
     \label{fig:token_embedding}
\end{figure}

\vspace{-1em}
Intuitively, the $R(\Z)$ term measures the volume of all the features, so maximizing it w.r.t.~\(\Z\) would promote that the features spread out (hence, we call it the ``expansion term''). The $R_{c}(\Z, \P)$ term measures the sum of volumes of features from each of the $K$ groups encoded by $\P$, so minimizing it (or maximizing $-R_{c}$) w.r.t.~$\Z$ would push features in each group to be close (hence we call it the ``compression term''). \citep{yu2020learning} showed that learning a representation by maximizing \eqref{eq:mcr2} under suitable normalization constraints leads to the features being \textit{compressed}, \textit{linearized}, and \textit{discriminative}: features from the same group lie in a low-dimensional subspace corresponding to the intrinsic low-dimensionality of the token data, and the subspaces they form are orthogonal to each other. %
See \Cref{fig:token_embedding} for a visualization.

\vspace{-2mm}
\subsection{Building White-Box Deep Networks via Unrolled Optimization}
\vspace{-2mm}

Having introduced the \mcr2 objective, we now discuss the principle of designing network architectures in a ``white-box'' manner (i.e., where the structure of the architecture is well motivated and interpretable) via \textit{algorithmic unrolling} 
(\cite{gregor2010learning,yu2024white}; see also the review \cite{monga_algorithm_2021} and references therein).
Algorithmic unrolling constructs each layer of the network to (approximately) implement a step of an optimization algorithm for our representation learning objective, where the network architecture is defined by the structure of an update step from an optimization algorithm (e.g., gradient ascent/descent) but the parameters of the update are not fixed at their analytical values (e.g., the computed gradient for a gradient step) but parameterized and learned through model training. Notably, \cite{yu2024white} argue that by unrolling a slightly modified \mcr2 objective, which additionally adds a $\ell^0$ sparsity penalty on $\Z$ to promote axis-aligned solutions, and uses a modified compression term $\bar{R}_c(\Z)$, one arrives at an architecture which is almost identical to a standard transformer (with the primary difference being the query, key, and value parameter matrices are all equal in the self-attention operator). In particular, a gradient step on the compression term, i.e., \(\Z^{+} = \Z - \tau\nabla_{\Z}\bar{R}_{c}(\Z)\), approximates a \textit{multi-head self-attention} operator with weight sharing, plus a residual connection. That is, \textit{incremental optimization of the compression term yields a self-attention operator.}  %
Further, a proximal gradient step on the remaining terms \(R(\Z) - \|\Z\|_{0}\) gives a soft thresholding operation which has a similar functional form to a two-layer perceptron with ReLU activation. %
Taken together, these resulting operations closely approximate a layer of a standard transformer (self-attention followed by a two-layer perceptron). The transformer-like network constructed by concatenating these layers is named \crate{}, and performs comparably to the empirically engineered \vit{} architecture \citep{dosovitskiy2020image}. However, as the attention mechanism in \crate{} is largely identical to standard self-attention, it still has time and memory complexity that scales \textit{quadratically} with $n$, the number of tokens. This quadratic complexity presents a large barrier to scaling up transformer-like models both within and without the \crate{} paradigm. In the sequel, inspired by how \crate{} derives a self attention-like operation as a gradient step on a compression term similar to $R_{c}$, we will construct a new attention mechanism as a gradient step on a novel \textit{variational reformulation} of $R_{c}$.

\vspace{-2mm}
\section{Proposed Attention Formulation} \label{sec:method}
\vspace{-2mm}

We now describe our new proposed attention mechanism, which we will derive by unrolled optimization of a novel variational form of the compression term within the \mcr2 objective.

\vspace{-2mm}
\subsection{A New Variational Form for Coding Rates}\label{sub:variational_form}
\vspace{-2mm}

To begin, we consider a general form of \mcr2-like objectives based on concave functions of the spectrum of a matrix.  Namely, for a given PSD matrix $\M \in \PSD(d)$ and any scalar $c \geq 0$ we have that $\log \det (\I + c \M) = \sum_{i=1}^{d} \log( 1 + c \lambda_i(\M))$ where (recall) $\lambda_i (\M)$ is the $i^\text{th}$ largest eigenvalue of $\M$.  Further note that $\log(1 + c \sigma)$ is a concave non-decreasing function of $\sigma$.  Thus, we describe our results in terms of a more general form of \mcr2 based on general spectral functions of PSD matrices of the form $F(\M) = \sum_{i=1}^{d} f(\lambda_i(\M))$, where $f$ is concave and non-decreasing.  In particular, recall from our above discussion that the attention mechanism arises from unrolling the compression component of \mcr2, so we consider a more general \mcr2-style compression function:
\vspace{-2mm}
\begin{equation}
    R_{c,f}(\Z,\P) \doteq \frac{1}{2}\sum_{k=1}^K \frac{n_k}{n} F\left( \frac{1}{n_k}\Z \Diag(\p_k) \Z^{\top} \right).
    \label{eq:mcr_c_gen}
\end{equation}
For objectives of this form, we now note the following result:
\begin{theorem}
\label{thm:var_concave}
    Let \(f \colon [0, \infty) \to \R\) be non-decreasing, concave, and obey \(f(0) = 0\), and let \(F \colon \PSD(d) \to \R\) have the form \(F(\M) = \sum_{i = 1}^{d}f(\lambda_{i}(\M))\). Then for each \(\M \in \PSD(d)\) and \(\Q \in \orthogonal(d)\), we have
    \vspace{-2mm}
    \begin{equation}
        \label{eq:U_bound}
        F(\M) \leq  \sum_{i=1}^{d} f\left( (\Q^{\top} \M \Q)_{ii} \right).
    \end{equation}
    Further, the inequality in \eqref{eq:U_bound} is achieved with equality for any $\Q$ which diagonalizes $\M$, and if $f$ is strictly concave then the inequality in \eqref{eq:U_bound} is achieved with equality \textit{if and only if} $\Q$ diagonalizes $\M$. 
\end{theorem}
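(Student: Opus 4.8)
This is essentially the statement that the diagonal of a symmetric matrix is majorized by its spectrum (Schur--Horn) together with the Schur-concavity of $\vx \mapsto \sum_i f(x_i)$, and I would give a short self-contained argument via Jensen's inequality. First, write an eigendecomposition $\M = \U\Lam\U^\top$ with $\U \in \orthogonal(d)$ and $\Lam = \Diag(\lambda_1(\M), \dots, \lambda_d(\M))$, and set $\W \doteq \U^\top\Q \in \orthogonal(d)$, so that $\Q^\top\M\Q = \W^\top\Lam\W$. Reading off diagonal entries gives $(\Q^\top\M\Q)_{ii} = \sum_{j=1}^d W_{ji}^2\,\lambda_j(\M) = \sum_{j=1}^d S_{ij}\,\lambda_j(\M)$ where $S_{ij} \doteq W_{ji}^2$. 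Since $\W$ has orthonormal rows and columns, $S$ is doubly stochastic: $S_{ij} \ge 0$, $\sum_j S_{ij} = 1$, and $\sum_i S_{ij} = 1$.

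\textbf{The inequality.} For each fixed $i$ the weights $\{S_{ij}\}_{j=1}^d$ form a probability distribution, so concavity of $f$ (Jensen) yields $f\big(\sum_j S_{ij}\lambda_j(\M)\big) \ge \sum_j S_{ij}\,f(\lambda_j(\M))$. Summing over $i$ and interchanging the two sums, $\sum_{i} f\big((\Q^\top\M\Q)_{ii}\big) \ge \sum_j \big(\sum_i S_{ij}\big) f(\lambda_j(\M)) = \sum_j f(\lambda_j(\M)) = F(\M)$, which is exactly \eqref{eq:U_bound}. (Only concavity is needed here; the hypotheses that $f$ is non-decreasing and $f(0)=0$ are consistent but unused for this bound.)

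\textbf{The equality conditions.} Sufficiency is immediate: if $\Q$ diagonalizes $\M$ then $\Q^\top\M\Q$ is diagonal with diagonal a permutation of $\lambda_1(\M),\dots,\lambda_d(\M)$, so both sides of \eqref{eq:U_bound} equal $F(\M)$, with no strict concavity required. For necessity, assume $f$ strictly concave and that \eqref{eq:U_bound} holds with equality. Since the bound was obtained by summing the per-$i$ Jensen inequalities, equality forces equality in each; by strict concavity this means that for each $i$ the distribution $\{S_{ij}\}_j$ is supported only on indices $j$ sharing one common eigenvalue value $\mu_i$, and in particular $(\Q^\top\M\Q)_{ii} = \mu_i \in \{\lambda_1(\M),\dots,\lambda_d(\M)\}$. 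To promote ``the diagonal entries are eigenvalues'' to ``$\Q^\top\M\Q$ is diagonal'' I would use a Frobenius-norm count: orthogonal invariance gives $\|\Q^\top\M\Q\|_F^2 = \|\Lam\|_F^2 = \sum_j \lambda_j(\M)^2$, while the support condition gives $\sum_i (\Q^\top\M\Q)_{ii}^2 = \sum_i \mu_i^2 = \sum_i\sum_j S_{ij}\lambda_j(\M)^2 = \sum_j \lambda_j(\M)^2$. Hence $\sum_{i\ne i'}(\Q^\top\M\Q)_{ii'}^2 = 0$, so $\Q^\top\M\Q$ is diagonal, i.e.\ $\M = \Q(\Q^\top\M\Q)\Q^\top$ is an eigendecomposition and $\Q$ diagonalizes $\M$.

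\textbf{Main obstacle.} Everything except the necessity direction is a direct application of Jensen's inequality to a doubly stochastic matrix. The delicate point is the last step: knowing only that each diagonal entry of $\Q^\top\M\Q$ lies in the spectrum of $\M$ is not by itself enough to conclude the matrix is diagonal (off-diagonal mass could in principle persist), so the Frobenius-norm bookkeeping --- equivalently, the sharpness case of Schur--Horn --- is the crux of the argument.
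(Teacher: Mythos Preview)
Your proof is correct and follows essentially the same route as the paper: set $\W=\U^\top\Q$, express $(\Q^\top\M\Q)_{ii}$ as a convex combination of eigenvalues with doubly stochastic weights $S_{ij}=W_{ji}^2$, apply Jensen, and sum. The one substantive difference is in the necessity direction of the equality case: the paper derives the same support condition (for each $i$, all $j$ with $W_{ji}\neq 0$ share a common eigenvalue) but then simply asserts that ``a $\W$ with these properties implies that $\Q=\U\W$ diagonalizes $\M$'' without further argument; your Frobenius-norm bookkeeping $\sum_i(\Q^\top\M\Q)_{ii}^2=\sum_j\lambda_j(\M)^2=\|\Q^\top\M\Q\|_F^2$ actually supplies that missing step, so on this point your write-up is more complete than the paper's.
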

\Cref{thm:var_concave} is a straightforward corollary of \Cref{thm:var_concave_lowrank}, which is stated and proved in \Cref{sec:proofs}. \Cref{thm:var_concave} allows us to upper-bound a function of large matrices via just computing a scalar function of the \textit{diagonals} of a matrix product, and if we know the spectral structure of the input matrix then we can craft orthogonal matrices for which this upper bound is made tight. This is a powerful tool and may be of independent interest.

\vspace{-2mm}
\subsection{Efficient Deep  Architecture via Unrolling the Variational Form}
\vspace{-2mm}

Using the above result, we can replace \eqref{eq:mcr_c_gen} with an equivalent variational objective with form
\vspace{-2mm}
\begin{equation}
    \label{eq:var_comp}
    R^{\variational}_{c,f} (\Z,\P \mid \{\U_{k}\}_{k = 1}^{K}) \doteq \frac{1}{2}\sum_{k=1}^K \frac{n_k}{n} \sum_{i=1}^{d} f\left(\frac{1}{n_k} (\U_{k}^{\top} \Z \Diag(\p_k) \Z^{\top} \U_{k})_{ii} \right),
\end{equation}
where the equivalence is in the sense that for an optimal choice of $\{ \U_k \in \orthogonal(d)\}_{k=1}^K$ matrices as described in \Cref{thm:var_concave} (i.e., orthogonal matrices which diagonalize each $\Z \Diag(\p_k) \Z^\top $) we will achieve a tight bound with $ R^{\variational}_{c,f} (\Z,\P \mid \{\U_{k}\}_{k = 1}^{K}) = R_{c,f} (\Z,\P)$. Note that in general, achieving this bound would require selecting, for each sampled instance of $\Z$, a new optimal set of $\U_{k}$ parameter matrices which diagonalize each $\Z\Diag(\p_{k})\Z^{\top}$, which is clearly impractical for a network architecture. 
Instead, as an alternative viewpoint, rather than considering the data ($\Z$) as fixed and trying to optimize the $\U_k$ parameters to achieve the tight variational bound, we can instead take the algorithmic unrolling design principle described above and %
design an operator to perturb $\Z$ to incrementally minimize $R_{c, f}^{\variational}(\cdot \mid \{\U_{k}\}_{k = 1}^{K})$.  To make this point explicit, each variational bound becomes tight when the eigenspaces of $\Z \Diag(\p_k) \Z^\top$ align with the columns of $\U_k$, so by rotating the appropriate columns of $\Z$ (namely, those which correspond to large entries in $\p_k$) to align with $\U_k$ we can approach a tight variational bound. That is, instead of rotating $\U_k$ to align with the data for each instance of $\Z$, we can instead rotate the token features in each $\Z$ to align with $\U_k$.

Following this approach, we compute a gradient descent step on $R_{c,f}^{\variational}$ w.r.t. $\Z$.
To begin this computation, first let $\p \in \Re^n$ be any element-wise non-negative vector. Then we have
\vspace{-2mm}
\begin{equation}
\nabla_{\Z} \ \frac{1}{2} \sum_{i = 1}^{d} f(  (\Z \Diag(\p) \Z^{\top} )_{ii}) %
= \; \Diag(\nabla f[ \Z^{\odot 2} \p] ) \Z \Diag(\p),
\end{equation}
where $\nabla f$ is the gradient of $f$, and (recall) $\nabla f[\cdot]$ applies $\nabla f$ to each element of the vector in the bracket. In particular, for $ f(x) = \log(1 + (d/\epsilon^{2}) x)$,  $\nabla f(x) = (d / \epsilon^{2}) (1+ (d / \epsilon^{2}) x)^{-1}$ is simply a non-linear activation. Also, (recall) $n_{k} = \langle \p_{k}, \1\rangle$. Thus, the gradient of $R^{\variational}_{c,f}$ w.r.t. $\Z$ is:
\vspace{-2mm}
\begin{equation}\label{eq:D_def}
    \nabla_{\Z} R^{\variational}_{c,f}(\Z,\P \mid \{\U_{k}\}_{k = 1}^{K}) =  \frac{1}{n} \sum_{k=1}^K \U_{k} \underbrace{\Diag \left( \nabla f\left[ (\U_{k}^{\top} \Z)^{\odot 2}  \frac{\p_k}{\langle \p_{k}, \1\rangle} \right] \right)}_{\doteq \D(\Z, \p_{k} \mid \U_{k})} \U_{k}^{\top} \Z \Diag(\p_k).
\end{equation}
(Note that the $1/n$ constant arises from a $(n_{k}/n)\cdot (1/n_{k}) = 1/n$ constant in each term of the sum.) If we now consider a gradient step w.r.t. the $j^\text{th}$ token $\z_j$
, we arrive at our proposed incremental compression operator, i.e., our surrogate for a \textit{self attention} + residual operator:
\vspace{-2mm}
\begin{equation}\label{eq:layer-operation-orig}
    \z_j^+ = \z_{j} - \tau \nabla_{\z_{j}}R_{c, f}^{\variational}(\Z, \P \mid \{\U_{k}\}_{k = 1}^{K}) = \z_j - \frac{\tau}{n} \sum_{k=1}^K \Pi_{jk} \U_{k} \D(\Z, \p_{k} \mid \U_{k}) \U_{k}^{\top} \z_j
\end{equation}
for each \(j \in [n]\), where $\tau > 0$ is a step size parameter for the incremental optimization. %

\vspace{-2mm}
\paragraph{Model interpretation.} Given the proposed attention operator in \eqref{eq:layer-operation-orig}, first recall that the rows of $\P$ are non-negative and sum to 1
, so our operator takes a weighted average of $K$ ``attention head''-esque operators and then adds a residual connection. Using that \(\sum_{k = 1}^{K}\Pi_{jk} = 1\), we can rewrite \eqref{eq:layer-operation-orig} as: %
\vspace{-2mm}
\begin{equation}
\label{eq:z_up}
    \z_j^+ = \sum_{k=1}^K \Pi_{jk} \Big[ \z_j \underbrace{- \frac{\tau}{n} \U_{k} \D(\Z, \p_{k} \mid \U_{k}) \U_{k}^{\top}}_{\text{action of one attention head}} \z_j \Big].
\end{equation}
That is, we can view each attention head as first projecting the token features onto the basis $\U_{k}$ via multiplying by $\U_k^\top$, multiplying by the diagonal matrix $\D(\Z, \p_{k} \mid \U_{k})$ (abbreviated as \(\D_{k}\)), projecting back into the standard basis via multiplying by $\U_{k}$, and subtracting this from the original token features via the residual connection. The core aspect of our attention layer is the computation of $\D_{k}$.  Namely, \(\Pi_{jk} \geq 0\), so $\p_k / \langle \p_{k}, \1\rangle \in \Re^n$ forms a probability distribution over which tokens belong to the $k^\text{th}$ group.  As a result, $(\U^\top_k \Z)^{\odot 2} \p_k / \langle \p_{k}, \1\rangle$ estimates the second moment of $\U_k^\top \Z$ under the distribution given by $\p_k /  \langle \p_{k}, \1\rangle$.  Further, since $f$ is a concave non-decreasing function, $\nabla f(x)$ monotonically decreases towards $0$ as $x$ increases, so the entries of $\D_{k}$ (which have form $\nabla f(x)$) achieve their maximum at $x=0$ %
and decay monotonically to $0$ as $x$ increases.

From this, we arrive at the core interpretation of our attention head + residual operators $[\I - (\tau/n)\U_{k}\D_{k}\U_{k}^{\top}]$.  Namely, this operator does an approximate low-rank data-dependent projection, where directions which have a large amount of ``power'' after the projection $\U_k^\top \Z$ (i.e., directions which have a large second moment $(\U_{k}^{\top}\Z)^{\odot 2}\p_k / \langle \p_{k}, \1\rangle$) are preserved, while directions which do not are suppressed. To see this, recall that the entries of $\D_k$ decrease monotonically to 0 as the second moment increases, so for directions with large second moments the attention + residual operator acts largely as the identity operator.  Conversely, for directions with a small second moment our operator subtracts a projection of the tokens along those directions, resulting in those directions being suppressed. Compared to the standard self-attention operator, our method clearly does not compute any pairwise similarities between tokens. Rather, the interactions between the tokens in $\Z$ impact the operator solely through their contribution to the second moment statistic used to construct the $\D_{k}$'s.  Nevertheless, similar to the standard self-attention operator, our method still has a clear interpretation as performing a form of compression towards a data-dependent low-rank structure, in the sense that it performs an approximate low-rank projection, where the specific directions which are suppressed are those which are not strongly aligned with other tokens in the group.

\begin{figure}[t]
    \centering \includegraphics[width=0.85\textwidth,trim={0 5.0cm 0 0},clip]{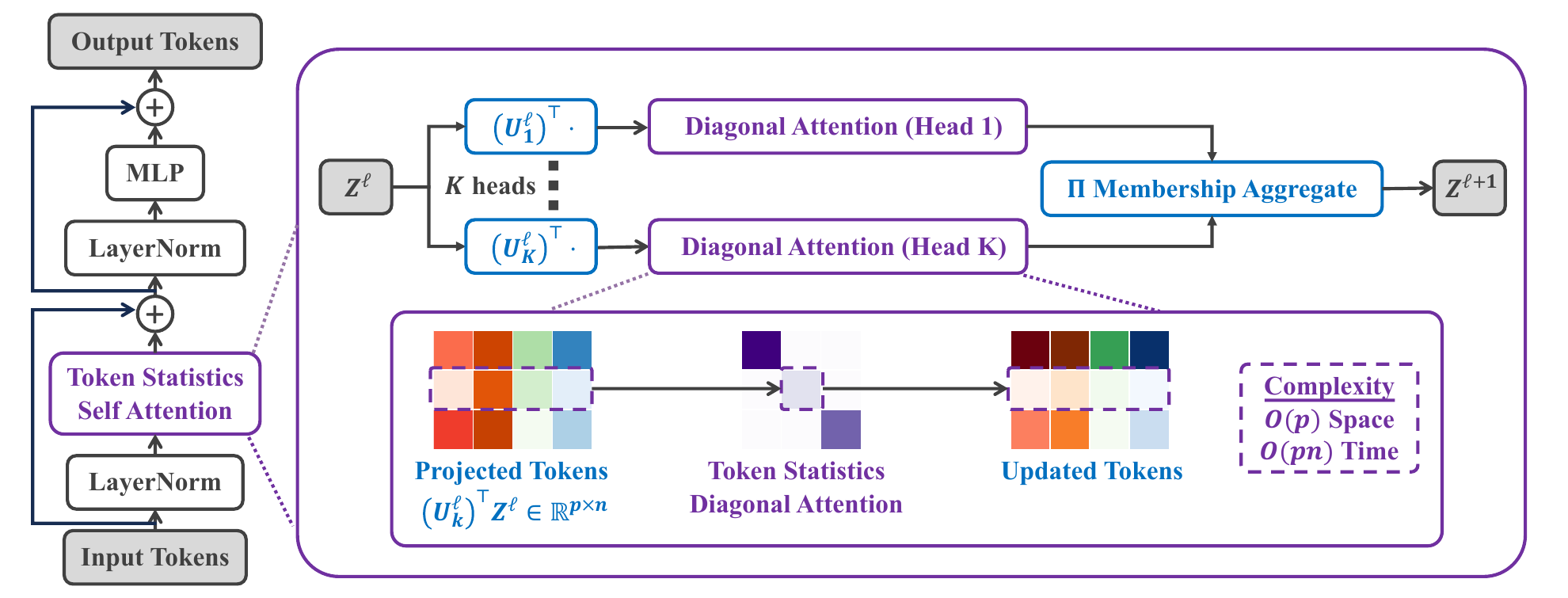}
    \vspace{-1mm}
    \caption{\small \textbf{One layer $\ell$ of the proposed Token Statistics Transformer (\ours).} Notably, the self-attention of \ours{} transforms tokens $\Z^{\ell}$ efficiently to $\Z^{\ell+1}$, via multiplying each row of the projected token by \textit{only a scalar}. This leads to reduced complexity of the attention (cf. \Cref{tab:transformer_complexity}): it has $\order(p)$ space and $\order(pn)$ time complexity, where $p$ is the dimension of the projected tokens of each head, and $n$ is the number of tokens.
    }
    \vspace{-4mm}
    \label{fig:vcrate-architecture}
\end{figure}

 \vspace{-2mm}
\subsection{Practical Implementation Considerations} \label{subsec:prac_implementation}
\vspace{-2mm}

Having introduced our proposed attention operator, we now discuss further practical considerations.  First, until this point in the presentation we have avoided discussion of how tokens are ``grouped'' into various attention heads via the $\P$ matrix, but clearly a means of constructing $\P$ is needed to implement our method.  Additionally, our variational form in \Cref{thm:var_concave} requires the $\U$ matrices to be square and orthogonal, but one would ideally like to use smaller matrices (i.e., reduce the number of columns in $\U$) for efficiency as well as drop the orthogonality constraints.

In practice, we do not enforce the orthogonality constraints. To reduce the number of columns in the $\U$ matrices, we note that similar to \crate{} \citep{yu2024white}, if we assume the features $\Z$ within group \(k\) are (approximately) clustered around a low-dimensional subspace --- say of dimension \(p\) --- then the within-group-\(k\) covariance \(\Z\Diag(\p_{k})\Z^{\top}\) is low-rank, where recall that \cite{yu2020learning} shows that the optimal geometry of $\Z$ will be for each group to be a low-rank subspace, orthogonal to the other groups. We can thus explicitly find a low-dimensional orthonormal basis for the image of this covariance, i.e., the linear span of the data in group \(k\). If the dimension is $p \leq d$, the basis can be represented by a $d\times p$ orthogonal matrix $\U_k \in \orthogonal(d, p)$. In this case, we can more efficiently upper-bound \(R_{c}\) using these low-rank orthogonal basis matrices. To show this, we use a more general version of \Cref{thm:var_concave} (see \Cref{thm:var_concave_lowrank} in \Cref{sec:proofs}) to yield the following corollary.
\begin{corollary}\label{cor:var_concave_logdet}
    Let \(f \colon [0, \infty) \to \R\) be non-decreasing, concave, and obey \(f(0) = 0\), and let \(F \colon \PSD(p) \to \R\) have the form \(F(\M) = \sum_{i = 1}^{p}f(\lambda_{i}(\M))\). Let \(\Z\), \(\P\) be fixed. Then, for all \(\U_{1}, \dots, \U_{K} \in \orthogonal(d, p)\) such that \(\image(\Z\Diag(\p_{k})\Z^{\top}) \subseteq \image(\U_{k})\) for all \(k \in [K]\), we have
    \begin{equation}
        R_{c, f}(\Z, \P) \leq R_{c, f}^{\variational}(\Z, \P \mid \{\U_{k}\}_{k = 1}^{K}), 
    \end{equation}
     where \(R_{c, f}^{\variational}\) is formally defined in \eqref{eq:var_comp}. Equality holds if \(\U_{k}\) diagonalizes \(\Z\Diag(\p_{k})\Z^{\top}\) for each \(k \in [K]\), and if \(f\) is strongly concave then this equality condition becomes an ``if and only if.''
\end{corollary}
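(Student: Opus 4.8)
The plan is to reduce the statement, one group at a time, to the square version already recorded in \Cref{thm:var_concave}. Fix $k\in[K]$ and set $\M_{k}\doteq\frac{1}{n_{k}}\Z\Diag(\p_{k})\Z^{\top}\in\PSD(d)$; since every coefficient $\tfrac12\tfrac{n_{k}}{n}$ is nonnegative and the summands with $n_{k}=0$ vanish on both sides (as $\p_{k}=\0$ forces $\Z\Diag(\p_{k})\Z^{\top}=\0$), it suffices to treat the case $n_{k}>0$. The structural observation I would make is that $\M_{k}$ is symmetric and, by hypothesis, $\image(\M_{k})\subseteq\image(\U_{k})$; since $\U_{k}\in\orthogonal(d,p)$ has orthonormal columns, $\U_{k}\U_{k}^{\top}$ is the orthogonal projector onto $\image(\U_{k})$, whence $\U_{k}\U_{k}^{\top}\M_{k}=\M_{k}=\M_{k}\U_{k}\U_{k}^{\top}$, and therefore $\M_{k}=\U_{k}\mathbf{N}_{k}\U_{k}^{\top}$ with $\mathbf{N}_{k}\doteq\U_{k}^{\top}\M_{k}\U_{k}\in\PSD(p)$.

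Next I would transport the spectral function across this change of basis. Writing $\M_{k}=\U_{k}(\mathbf{N}_{k}\U_{k}^{\top})$, the matrices $\U_{k}(\mathbf{N}_{k}\U_{k}^{\top})$ and $(\mathbf{N}_{k}\U_{k}^{\top})\U_{k}=\mathbf{N}_{k}$ share the same multiset of nonzero eigenvalues; combined with $f(0)=0$ and $\rank(\M_{k})\le p$ this gives $\sum_{i=1}^{d}f(\lambda_{i}(\M_{k}))=\sum_{i=1}^{p}f(\lambda_{i}(\mathbf{N}_{k}))=F(\mathbf{N}_{k})$, so the $k$-th summand of $R_{c,f}(\Z,\P)$ equals $\tfrac12\tfrac{n_{k}}{n}F(\mathbf{N}_{k})$. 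Applying \Cref{thm:var_concave} to $\mathbf{N}_{k}\in\PSD(p)$ with $\Q=\I_{p}$ yields $F(\mathbf{N}_{k})\le\sum_{i=1}^{p}f\bigl((\mathbf{N}_{k})_{ii}\bigr)$; since $(\mathbf{N}_{k})_{ii}=\frac{1}{n_{k}}\bigl(\U_{k}^{\top}\Z\Diag(\p_{k})\Z^{\top}\U_{k}\bigr)_{ii}$, multiplying by $\tfrac12\tfrac{n_{k}}{n}$ and summing over $k$ gives exactly $R_{c,f}(\Z,\P)\le R_{c,f}^{\variational}(\Z,\P\mid\{\U_{k}\}_{k=1}^{K})$.

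For the equality clause: if each $\U_{k}$ diagonalizes $\Z\Diag(\p_{k})\Z^{\top}$ then $\mathbf{N}_{k}$ is diagonal, so $\{(\mathbf{N}_{k})_{ii}\}_{i}$ is the multiset of eigenvalues of $\mathbf{N}_{k}$ and each of the $K$ applications of \Cref{thm:var_concave} above is an equality, hence so is the sum. Conversely, a strongly concave $f$ is in particular strictly concave, so the equality characterization in \Cref{thm:var_concave} forces $\Q=\I_{p}$ to diagonalize $\mathbf{N}_{k}$, i.e.\ $\mathbf{N}_{k}=\U_{k}^{\top}\M_{k}\U_{k}$ is diagonal, which via $\M_{k}=\U_{k}\mathbf{N}_{k}\U_{k}^{\top}$ says precisely that $\U_{k}$ diagonalizes $\Z\Diag(\p_{k})\Z^{\top}$; since all weights $\tfrac12\tfrac{n_{k}}{n}$ are strictly positive, equality in the summed bound holds iff it holds in every group, which is the desired ``if and only if.''

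The only step I expect to take genuine care is the first paragraph: deriving $\M_{k}=\U_{k}\mathbf{N}_{k}\U_{k}^{\top}$ from the image-containment hypothesis and confirming that this compression preserves the nonzero spectrum, so that an honest application of the $p\times p$ theorem \Cref{thm:var_concave} is legitimate. Once that reduction is in place, the remainder is a direct invocation of \Cref{thm:var_concave} together with bookkeeping of the nonnegative mixing weights $\tfrac{n_{k}}{n}$, and the degenerate case $n_{k}=0$ plus the mild regularity of $f$ at $0$ are disposed of in a sentence.
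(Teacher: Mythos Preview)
Your argument is correct, but it takes a different route from the paper. The paper derives \Cref{cor:var_concave_logdet} by directly invoking \Cref{thm:var_concave_lowrank}, the rectangular generalisation of \Cref{thm:var_concave}: for each $k$ it applies that result to $\M=\tfrac{1}{n_{k}}\Z\Diag(\p_{k})\Z^{\top}\in\PSD(d)$ with $\Q=\U_{k}\in\orthogonal(d,p)$ and the image-containment hypothesis, obtaining the $k$-th summand inequality in one stroke, and then sums. You instead bypass \Cref{thm:var_concave_lowrank} altogether: using the image hypothesis you compress $\M_{k}$ to $\mathbf{N}_{k}=\U_{k}^{\top}\M_{k}\U_{k}\in\PSD(p)$, transfer $F$ across this compression via the shared nonzero spectrum and $f(0)=0$, and then apply the \emph{square} \Cref{thm:var_concave} (in dimension $p$) with $\Q=\I_{p}$. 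In effect, your proof embeds a self-contained reduction of the rectangular bound to the square one, so it only needs the simpler \Cref{thm:var_concave} as a black box; the paper's route is shorter because it delegates that reduction to \Cref{thm:var_concave_lowrank}. Both handle the equality clauses in the same way, and your treatment of the degenerate $n_{k}=0$ case and the ``strongly $\Rightarrow$ strictly concave'' implication is sound.
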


The final step to define our attention operator is to estimate the group membership $\P$. For this we posit a simple model of how each feature \(\z_{j}\) deviates from its supporting subspace and then find the optimal subspace assignment. \cite{yu2024white} show that if we independently model each \(\z_{j}\) as belonging to a low-dimensional Gaussian mixture model, where each Gaussian has a covariance matrix with identical spectrum and is supported on a subspace with orthonormal basis \(\U_{k}\), plus independent Gaussian noise with covariance \(\eta \I\), then the posterior probability that each token \(\z_{j}\) belongs to each subspace is given by the assignment matrix \(\P = \P(\Z \mid \{\U_{k}\}_{k = 1}^{K})\) as follows:
\begin{equation}
\label{eq:v}
\scalebox{0.875}{\(\displaystyle \P = \begin{bmatrix} \boldsymbol{\nu}(\z_1 \mid \{\U_{k}\}_{k = 1}^{K})^\top \\ \vdots \\ \boldsymbol{\nu}(\z_n \mid \{\U_{k}\}_{k = 1}^{K})^\top \end{bmatrix}, \quad
\text{where} \quad 
\boldsymbol{\nu}(\z_j \mid \{\U_{k}\}_{k = 1}^{K}) \doteq \operatorname{softmax}\left( \frac{1}{2\eta} \begin{bmatrix} \|  \U_{1}^{\top} \z_j \|_2^2 \\ \vdots \\ \| \U_{K}^{\top} \z_j \|_2^2 \end{bmatrix} \right), \quad \forall j \in [n],\)}
\end{equation}
where $\eta$ becomes a learnable temperature parameter. Thus, given an input feature \(\Z\), we estimate \(\P\) using \eqref{eq:v} then compute the attention operator.  Combining the construction of $\P$ in \eqref{eq:v} with %
\eqref{eq:layer-operation-orig}, we obtain the {\em Token Statistics Self-Attention} (\texttt{TSSA}) operator:
\vspace{-3mm}
\begin{equation}
    \label{eq:tssa}
    \TSSA{\Z \mid \{\U_{k}\}_{k = 1}^{K}} \doteq -\frac{\tau}{n}\sum_{k = 1}^{K}\U_{k}\D(\Z, \p_{k} \mid \U_{k})\U_{k}^{\top}\Z\Diag(\p_{k}),
\end{equation}
where \(\p_{k}\) are the columns of \(\P = \P(\Z \mid \{\U_{k}\}_{k = 1}^{K})\), defined in \eqref{eq:v}, and \(\D\) is defined in \eqref{eq:D_def}. We use this \texttt{TSSA} operator to construct an architecture, the {\em Token Statistics Transformer} (\ours{}). Namely, our focus is to construct an alternative attention mechanism, so to obtain \ours{} we replace the attention operator in \xcit{} \citep{ali2021xcit} with the \texttt{TSSA} operator (with $f(x) = \log(1 + (d/\epsilon^{2})x)$ as in \Cref{sub:variational_form}), along with other changes detailed in \Cref{sec:experiments} which serve to simplify the architecture. We visualize one layer of \ours{} in \Cref{fig:vcrate-architecture}, and provide pseudocode for the model in \Cref{sec:pseudocode}.

\vspace{-2mm}
\paragraph{Complexity analysis.} 
\Cref{tab:transformer_complexity} gives a comparison of computational complexity between each attention head of the \texttt{TSSA} operator in our proposed \ours{} architecture and those of attention mechanisms in other existing transformers. As we see, the  \texttt{TSSA} attention head is asymptotically less complex than alternatives in either $n$ or $p$ for both compute time and memory requirements.  This is confirmed in real computation metrics shown in \Cref{fig:compute_intro} for \ours{} and \vit{} (with 12 attention layers, $d=384$, and $K=8$); we present more computation metrics for all models in \Cref{sub:app_memory_time}.
\vspace{-3mm}
\begin{table}[t]
\centering
\caption{\small \textbf{Time and space complexity for attention operators in different transformer architectures}: \vit{} \citep{dosovitskiy2020image}, \crate{} \citep{yu2024white}, \xcit{} \citep{ali2021xcit}, and the proposed \ours.}
\vspace{-2mm}
\label{tab:transformer_complexity}
\resizebox{0.7\textwidth}{!}{%
\begin{tabular}{@{}llllll@{}}
\toprule
                 &  & \vit         & \crate                 & \xcit              & \ours{} (ours)           \\ \midrule
Compute time complexity  &  & $\mathcal{O}(pn^2)$ & $\mathcal{O}(pn^2)$ & $\mathcal{O}(p^2n)$ & ${\mathcal{O}}(pn)$ \\
Memory complexity &  & $\mathcal{O}(n^2)$ & $\mathcal{O}(n^2)$ & $\mathcal{O}(p^2)$ & $\mathcal{O}(p)$ \\ \bottomrule
\end{tabular}
}
\vspace{-3mm}
\end{table}

\vspace{-4mm}
\section{Experiments}\label{sec:experiments}
\vspace{-3mm}

In this section, we conduct experiments to verify and study the properties and performance of our proposed Token Statistics Transformer (\ours{}) on real-world datasets and tasks. As detailed in \Cref{sec:method}, we design our white-box attention operator by following the principle of \mcr2 and unrolling the compression objective defined in \eqref{eq:var_comp}. We thus adopt a simplistic implementation as close to our theoretical derivation as possible. Hence, it is not the goal of this work to show the current implementation of \ours{} can outperform existing highly engineered architectures.
Instead, our empirical studies aim to provide answers and evidence for the following questions:

\begin{enumerate}[leftmargin=*,itemsep=0pt,topsep=-5pt]
    \item Does our proposed \texttt{TSSA} attention operator optimize the compression objective \ref{eq:var_comp} in practice? 
    \item If we simply replace standard self-attention with our \texttt{TSSA} attention operator, leaving the rest of the architecture largely unchanged, do we maintain (or improve) task performance?   
\end{enumerate}
We provide positive answers to both questions. First, since our \texttt{TSSA} attention operator is mathematically derived via algorithmic unrolling, it should optimize a quantitative objective of the representation, which we show occurs layer-by-layer. Second, we demonstrate that our proposed \ours{} model, despite its simplicity and efficiency, already performs competitively on real-world large datasets when compared to architectures which use conventional self-attention mechanisms.

\vspace{-3mm}
\paragraph{Model architecture.}
Since our main focus is the proposed \texttt{TSSA} operator as explained in~\Cref{sec:method}, we follow a minimalist approach of architecture design by directly replacing attention layers in existing transformer architectures with the \texttt{TSSA} operator. Concretely, our realization of \ours{} inherits most other design choices from \xcit{} \citep{ali2021xcit}\footnote{To align with our theoretical formulation, we remove the Local Patch Interaction (LPI) module from \xcit. See details in~\Cref{subsec:model_config}.} and GPT-2 \citep{radford2019language} for vision and language tasks, respectively. 
We also test different model sizes by varying the number of attention heads $K$, the token dimension $d$, and number of layers $L$. 
For supervised classification, we also follow \xcit{} by inserting a learnable [\texttt{CLS}] token that aggregates features from image patches via a global class attention layer. More details on implementation and model configurations are provided in~\Cref{sec:app_exp_detail}.

\vspace{-3mm}
\paragraph{Datasets and optimization.}
For vision experiments, we pre-train the proposed \ours{} models on the ImageNet-1k \citep{deng2009imagenet} dataset. We also use these pre-trained networks as initialization and fine-tune them on smaller datasets including CIFAR10/100 \citep{krizhevsky2009learning}, Oxford Flowers \citep{Nilsback08}, Oxford-IIT-Pets \citep{parkhi12a} for transfer learning evaluations. 
We also adopt the Long-Range Arena \citep{tay2021long} benchmark to analyze the long sequence modeling capability of \ours{}.
For causal language modeling, we train \ours{} autoregressively on the OpenWebText dataset\citep{Gokaslan2019OpenWeb} and test the trained model on its test split as well as other datasets as shown in \citep{radford2019language}.
Details on training and optimization are provided in~\Cref{subsec:train_setting}.

\begin{figure*}[t]
    \centering
    \begin{subfigure}[b]{0.525\textwidth}
        \centering
        \includegraphics[width=\textwidth]{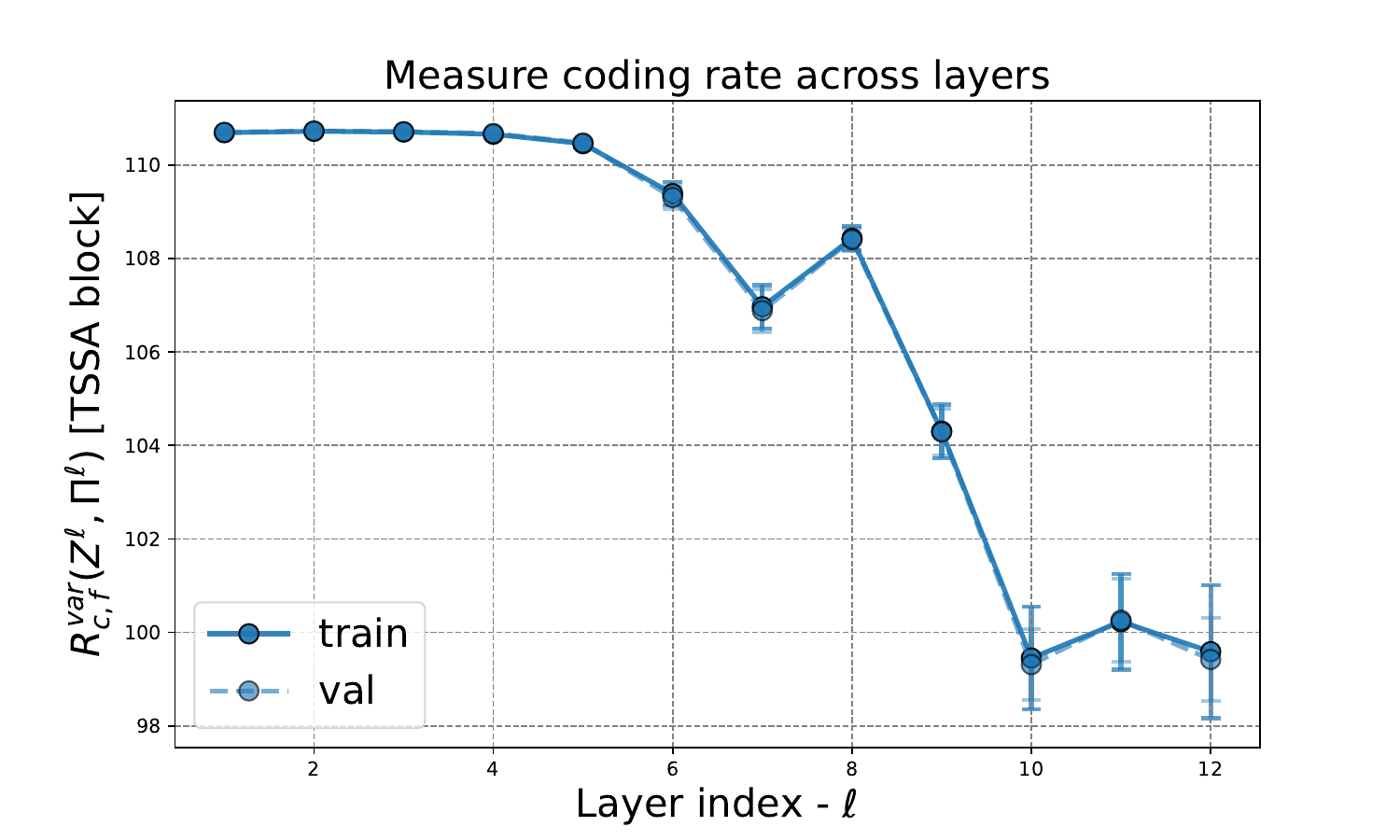}
        \caption{}
        \label{fig:compression_layerwise_vis}
    \end{subfigure}
    \begin{subfigure}[b]{0.275\textwidth}
        \centering
        \includegraphics[width=\textwidth]{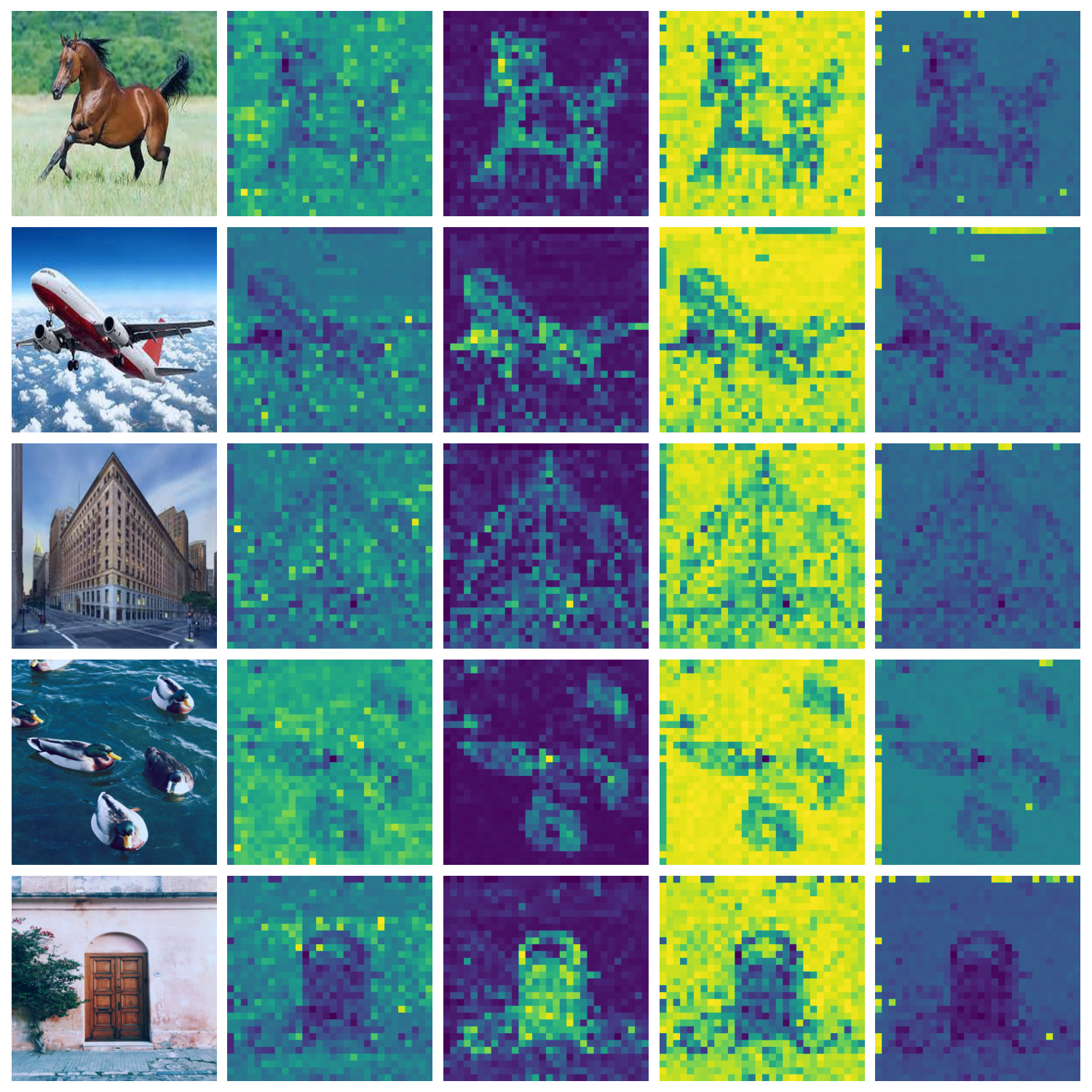}
        \caption{}
        \label{fig:compression_pi_vis}
    \end{subfigure}
    \vspace{-2mm}
    \caption{\small (\textit{Left}) \textbf{The variational compression term $R_{c,f}^{\variational} (\Z^{\ell},\P^{\ell})$ of the \texttt{TSSA} outputs $\Z^{\ell}$ and estimated memberships $\P^{\ell}$ at different layers $\ell$ of the \ours{}-S model.} (\textit{Right}) \textbf{Visualization of estimated $\P$ for several images.} For an image with $N$ tokens (patches), we visualize each row of the membership matrix $\P$ in the \texttt{TSSA} layer by reshaping it into a $\sqrt{N} \times \sqrt{N}$ matrix. Here we visualize membership matrices $\P$ in \ours{-S}, estimated in layer 9, of each input image.  
    }
    \vspace{-3mm}
    \label{fig:compression_vis}
\end{figure*}

\vspace{-2mm}
\subsection{Layer-wise Analysis of the \texttt{TSSA} Operator}
\label{sec:analysis}
\vspace{-2mm}

In this sub-section, we aim to answer the first question by providing both qualitative and quantitative results that demonstrate each layer of the proposed \ours{} attention operator indeed optimizes its designed objective. This layer-wise analysis is possible due to the white-box design of the \texttt{TSSA} operator, similar to studies conducted for \crate{}, but different from most black-box models. 

\vspace{-3mm}
\paragraph{Do \ours{} attention layers achieve their design goals?}
As detailed in~\Cref{sec:method}, the \ours{} attention operator aims to optimize the variational compression term $R_{c,f}^{\variational}(\Z,\P)$ defined in~\Cref{eq:var_comp}. In order to understand whether the learned attention operator indeed performs such optimization, we explicitly measure and plot the objective value $R_{c,f}^{\variational}(\Z^{\ell},\P^{\ell})$ of the outputs $\Z^{\ell}$ and group estimations $\P^{\ell} = \P(\Z^{\ell} \mid \{\U_{k}^{\ell}\}_{k = 1}^{K})$ of each layer ${\ell}$. We measure this term on both training and validation set of the ImageNet-1k dataset by taking 500 samples from each. We plot our observations in \Cref{fig:compression_layerwise_vis}. The compression term mostly decreases by layer, aligning with the design goal.

\vspace{-3mm}
\paragraph{Visualizing membership assignment $\P$ in  \ours{}.} At each layer \(\ell\), the estimated membership matrix \(\P^{\ell}\) has the interpretation that each column $\p_k^\ell \in \R^N$ represents the estimated probabilities for each of the $N$ tokens to be assigned to the $k^{\mathrm{th}}$ subspace. Hence, the $\P$ matrix represents a soft clustering of the $N$ tokens to $K$ clusters, or subspaces, via the $K$ attention heads. In \Cref{fig:compression_pi_vis}, we visualize the clustering at a selected layer, and observe that foreground image patches are clustered in the same subspace, confirming our interpretation. See \Cref{fig:appendix:pi_map} for more results.

\vspace{-2mm}
\subsection{Evaluation on Real-World Visual Datasets}
\vspace{-2mm}

We now provide evaluations and analysis of \ours{} models on vision classification tasks, along with comparison against other state-of-the-art vision transformer architectures.

\vspace{-3mm}
\paragraph{ImageNet-1k performance and transfer learning results.}
In \Cref{tab:Comparison_with_Sota}, we report the top-1 accuracy of \ours{} on the ImageNet-1k dataset as well as their transfer learning performance on several smaller datasets. We observe that the proposed \ours{} models achieve comparable ImageNet-1k and transfer learning performance as other empirically engineered architectures, while being significantly more efficient and interpretable with fewer parameters. Notably, we identify promising scaling behavior, where our models consistently improve and the performance gap between \ours{} and other networks decreases as we scale up the model sizes.

\begin{table*}[t!]
\centering
\caption{\small \textbf{Top 1 accuracy of \ours{} on various datasets with different model sizes when pre-trained on ImageNet.} For ImageNet/ImageNetReaL, we directly evaluate the top-1 accuracy. For other datasets, we pre-train the model on ImageNet and then evaluate the transfer learning performance via fine-tuning. }
\label{tab:Comparison_with_Sota}
\vspace{-1mm}
\small
\setlength{\tabcolsep}{10.6pt}
\resizebox{0.95\textwidth}{!}{%
\begin{tabular}{@{}lccc|cc|cc@{}}
\toprule
Datasets & \ours{-T(iny)} & \ours{-S(mall)} & \ours{-M(edium)} &  { \color{gray} \xcit-S} &  { \color{gray}\xcit-M} &  {\color{gray}\vit-S} & {\color{gray}\vit-B(ase)}\\ 
\toprule
 \# parameters & 5.8M & 22.6M & 68.1M &  { \color{gray}24.9M }& { \color{gray}80.2M } & { \color{gray} 22.1M } & { \color{gray}86.6 M } \\
\midrule
 ImageNet & 67.3 & 77.9 & 80.3 &  { \color{gray} 80.5} & { \color{gray} 81.5 } & { \color{gray} 79.8} & { \color{gray} 81.8} \\
 ImageNet ReaL & 72.2  & 84.1 & 85.6 &  { \color{gray} 85.6 } & { \color{gray} 85.9} & { \color{gray} 85.6 } & { \color{gray} 86.7 }\\
 \midrule
 CIFAR10 & 95.5 & 96.5 & 97.5 &  { \color{gray} 98.1} & { \color{gray} 98.3} & { \color{gray} 98.6} & { \color{gray} 98.8}\\
 CIFAR100 & 78.3 & 82.7 & 84.5 &  { \color{gray} 86.1} & { \color{gray} 87.6} & { \color{gray} 88.8} & { \color{gray} 89.3}\\
 Oxford Flowers-102 & 88.6 & 92.8 & 94.2 &  { \color{gray} 93.9} & { \color{gray} 94.0} & { \color{gray} 94.0}& { \color{gray} 95.7}\\
 Oxford-IIIT-Pets & 85.6 & 91.1 & 92.8 &  { \color{gray} 92.9} & { \color{gray} 94.0} & { \color{gray} 92.8} & { \color{gray} 94.1} \\
 \bottomrule
\end{tabular}%
}
\vspace{-3mm}
\end{table*}

\begin{figure}[t!]
\centering\includegraphics[width=0.8\textwidth]{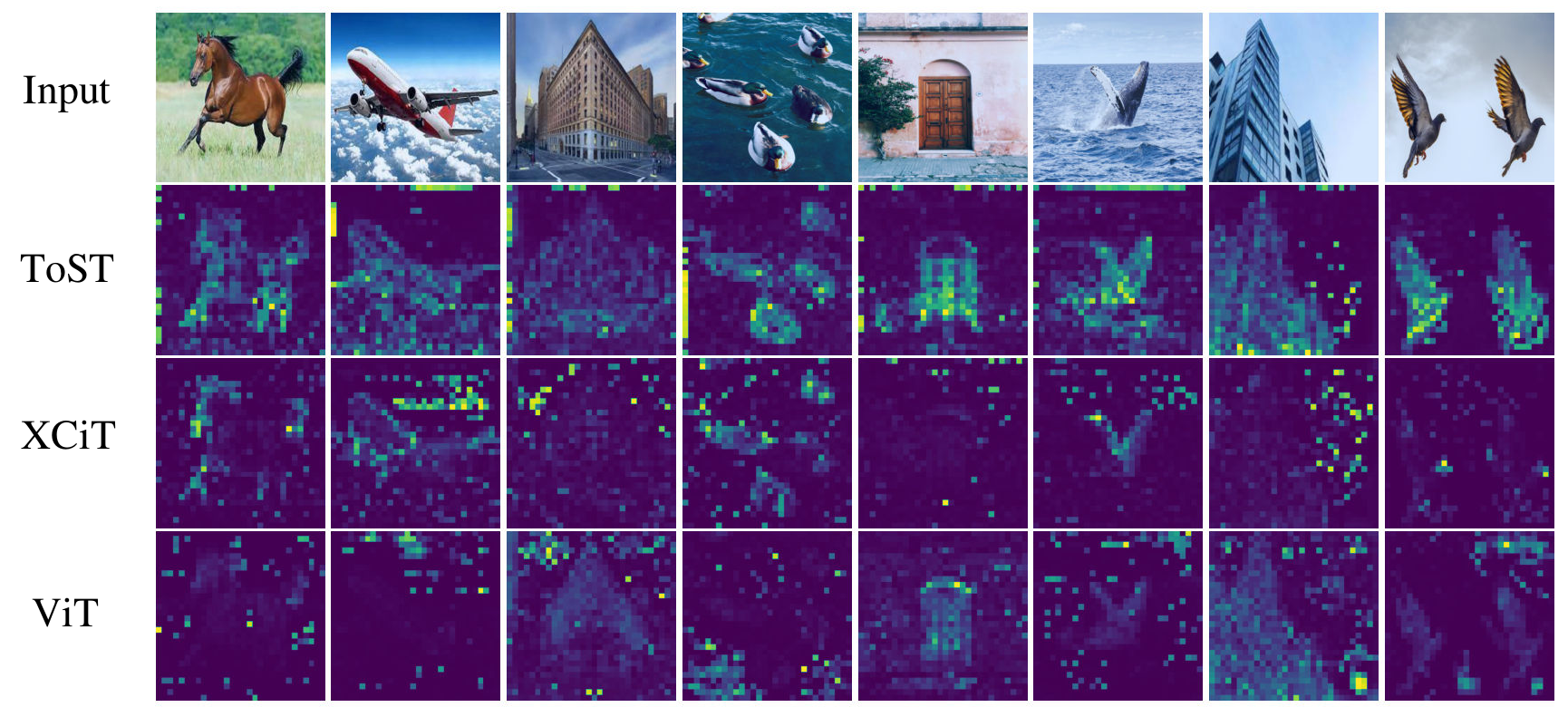}
\vspace{-1mm}
    \caption{\small \textbf{Comparison of \texttt{[CLS]} token attention map visualization.} We take the last head in the penultimate global class attention layer for visualization from \ours{-S},\xcit-S, and \vit-S, respectively.}
    \vspace{-2mm}
    \label{fig:cls_attn_vis}
\end{figure}

\vspace{-3mm}
\paragraph{Visualizing attention maps of \ours{}}
We can gain additional understanding of the features learned by \ours{} based on visualization of the self-attention maps. For the global class attention layers, we extract and visualize the attention map between \texttt{[CLS]} token and image patch tokens for each head using the widely adopted approach from \citep{caron2021emerging}. In~\Cref{fig:cls_attn_vis}, we visualize the attention maps in the penultimate layer, for \ours-S, \xcit-S, and \vit{}-S models. Based on the results, \ours{} autonomously learns to perform segmentation and clustering without the need for complex self-supervised training techniques or segmentation-related annotations. 
Compared to the XCiT \citep{ali2021xcit} and \vit{} \citep{touvron2021training} models, \ours{} demonstrates more meaningful segmentation results. \Cref{fig:appendix:cls_attn} contains more attention map visualization for each model.

\subsection{Evaluation on Language and Long Sequence Tasks}

\vspace{-3mm}
\paragraph{Long sequence modeling.}  To assess the performance of \ours{} on long sequence tasks, we use the Long-Range Arena (LRA) benchmark \citep{tay2021long} and compare \ours{} against the transformer and its efficient variants \citep{kitaev2020reformer,zaheer2020big, wang2020linformer,choromanski2021rethinking}. The results are reported in~\Cref{tab:lra_results}. These results demonstrate promising long-range modeling capability of \ours{} which outperforms other transformer-based methods on average. We note that this benchmark is currently dominated by state-space models (SSMs) rather than transformer models, but among transformer-based models ToST has highly competitive performance, demonstrating the utility of our \texttt{TSSA} attention operator even for long sequence lengths. For completeness, we also include performance results for S4 \citep{gu2022efficiently}, a representative SSM.

\begin{table}[t]
\centering
\vspace{-2mm}
\caption{\small \textbf{Long-Range Arena (LRA) performance comparison.} }
\vspace{-2mm}
\setlength{\tabcolsep}{10.6pt}
\resizebox{0.8\textwidth}{!}{%
\begin{tabular}{@{}lccccccc@{}}
        \toprule
        Model        & ListOps  & Text     & Retrieval & Image    & Pathfinder & Avg      \\
        \midrule
        Reformer              & \textbf{37.27} & 56.10             & 53.40              & 38.07             & 68.50               & 50.56             \\
        BigBird               & 36.05             & 64.02             & 59.29              & 40.83             & 74.87                & 54.17             \\
        LinFormer         & 16.13             & \underline{65.90} & 53.09              & 42.34             & \underline{75.30}               & 50.46             \\
        Performer             & 18.01             & 65.40             & 53.82              & 42.77             & \textbf{77.05}                & 51.18             \\
        Transformer           & 37.11             & 65.21             & \underline{79.14}              & \underline{42.94}             & 71.83              & \underline{59.24}            \\
        \textbf{\ours{} (ours)} & \underline{37.25}    & \textbf{66.75}    & \textbf{79.46}     & \textbf{46.62}    &    69.41      &     \textbf{59.90}\\
        \midrule
        \color{gray}S4& \color{gray}59.60 & \color{gray}86.82 & \color{gray}90.90 & \color{gray}88.65 & \color{gray}94.20  & \color{gray}84.03\\
        
        \bottomrule
    \end{tabular}%
}
\vspace{-2mm}
\label{tab:lra_results}
\end{table}

\vspace{-3mm}
\paragraph{Scalability of ToST for causal language modeling.} We can naturally extend the \ours{} architecture to perform next-token prediction by introducing a simple causal variant of the \texttt{TSSA} operator, as detailed in~\Cref{sec:causal_tost}. We present our results in~\Cref{tab:gpt2_results}. We observe that performance of \ours{} on various text datasets steadily improve as we scale up the size of the model, demonstrating promising scaling behavior. Due to computational constraints, we did not systematically sweep hyperparameters for training \ours{} for this task and employed the same set as GPT2-Base for training all our models. Still, we find that swapping our \texttt{TSSA} attention operator for self-attention largely preserves model performance, despite self-attention largely being heralded as the key component of transformer performance for language modeling tasks. \Cref{fig:compute_intro_causal} shows that the causal version of \ours{} is also faster and uses less memory than standard transformers for autoregressive prediction.

\begin{table}[t]
\centering
\vspace{-1mm}
\caption{\small \textbf{Causal language modeling performance.} (\textit{Left}) We report zero-shot cross entropy loss of GPT2-Base compared with \ours{} of varying model sizes on the test set of the datasets (lower is better). (\textit{Right}) Relative time and memory usage for base models.}
\vspace{-2mm}
\begin{minipage}{0.58\textwidth}
\centering
\resizebox{\textwidth}{!}{%
\begin{tabular}{@{}lcccccc@{}}
\toprule
Model & \# params & OWT & Lambada & Wikitext & PTB & Avg $\downarrow$ \\ \midrule
GPT2-Base & 124M & 2.84 & 4.32 & 4.13 & 5.75 & 4.26 \\
\ours-Base & 110M & 3.20 & 4.98 & 4.77 & 6.39 & 4.84 \\
\ours-Medium & 304M & 2.88 & 4.45 & 4.30 & 5.64 & 4.32 \\
\ours-Large & 655M & 2.72 & 4.32 & 3.99 & 5.03 & 4.02 \\ \bottomrule
\end{tabular}%
}
\end{minipage}%
\hfill
\begin{minipage}{0.40\textwidth}
\centering
\resizebox{\textwidth}{!}{%
\begin{tabular}{@{}lcccc@{}}
\toprule
Model & \multicolumn{2}{c}{Length 4k} & \multicolumn{2}{c}{Length 8k} \\ \cmidrule(l){2-5} 
 & Time & Memory & Time & Memory \\ \midrule
GPT2-Base & $1\times$ & $1\times$ & $1\times$ & $1\times$ \\
\ours-Base & $0.60\times$ & $0.41\times$ & $0.46\times$ & $0.24\times$ \\ \bottomrule
\end{tabular}%
}
\end{minipage}
\label{tab:gpt2_results}
\vspace{-4mm}
\end{table}

\vspace{-3mm}
\section{Conclusion}
\vspace{-3mm}

In this work, we propose a novel attention operator based on white-box design by unrolled optimization of a novel variational form of the \mcr2 objective.  Our resulting architecture (\ours{}) is unique among attention operators in that it does not require computing pairwise interactions between tokens and instead is constructed from a second moment statistic of projected token features.  This results in our operator being significantly more efficient than standard attention operators, while still achieving similar performance to comparable transformers. We believe that this work provides an initial demonstration of the tremendous potential in designing novel and efficient deep architectures from mathematical principles. %

\bibliography{reference,tianjiao_zotero}
\bibliographystyle{iclr2025_conference}

\appendix

\section{Proofs of the Theoretical Results} \label{sec:proofs}

Below we present a more refined version of \Cref{thm:var_concave}, which easily proves \Cref{thm:var_concave} by taking \(p = d\), and also is sufficient to prove \Cref{cor:var_concave_logdet}.

\begin{theorem}\label{thm:var_concave_lowrank}
    Let \(f \colon [0, \infty) \to \R\) be non-decreasing, concave, and obey \(f(0) = 0\), and let \(F \colon \PSD(d) \to \R\) have the form 
    \begin{equation}
        F(\M) = \sum_{i = 1}^{d}f(\lambda_{i}(\M)), \qquad \forall \M \in \PSD(d).
    \end{equation}
    Then, for all \(\M \in \PSD(d)\) and all \(\Q \in \orthogonal(d, p)\) with \(p \leq d\) such that \(\image(\M) \subseteq \image(\Q)\), we have 
    \begin{equation}\label{eq:generalized_diagonal_inequality}
        F(\M) \leq \sum_{i = 1}^{p}f((\Q^{\top}\M\Q)_{ii}).
    \end{equation}
    The inequality in \eqref{eq:generalized_diagonal_inequality} is achieved with equality for any \(\Q\) which diagonalizes \(\M\), i.e., \(\Q^{\top}\M\Q\) is diagonal. Moreover, if \(f\) is strictly concave, then the inequality in \eqref{eq:generalized_diagonal_inequality} is achieved with equality if and only if \(\Q\) diagonalizes \(\M\).
\end{theorem}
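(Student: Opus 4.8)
The plan is to reduce the spectral inequality \eqref{eq:generalized_diagonal_inequality} to a statement about doubly substochastic (in fact orthostochastic-type) matrices and then apply a Jensen/majorization argument. First I would observe that since \(\image(\M) \subseteq \image(\Q)\), if \(\M\) has rank \(r \le p\) then \(\M = \Q \Q^\top \M \Q \Q^\top\), so \(\M\) and the compressed matrix \(\Q^\top \M \Q \in \PSD(p)\) have exactly the same nonzero eigenvalues; in particular \(F(\M) = \sum_{i=1}^{p} f(\lambda_i(\Q^\top \M \Q))\) (using \(f(0)=0\) to discard the zero eigenvalues on the \(p\)-side). Hence it suffices to prove the case \(p = d\), \(\Q \in \orthogonal(d)\): that is, for \(\N \in \PSD(d)\) and \(\Q\) orthogonal,
\[
\sum_{i=1}^{d} f(\lambda_i(\N)) \;\le\; \sum_{i=1}^{d} f\big((\Q^\top \N \Q)_{ii}\big).
\]

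The key step is the following. Write \(\N = \mV \mLambda \mV^\top\) with \(\mV \in \orthogonal(d)\) and \(\mLambda = \Diag(\lambda_1(\N), \dots, \lambda_d(\N))\). Then \((\Q^\top \N \Q)_{ii} = \sum_{j=1}^{d} S_{ij} \lambda_j(\N)\), where \(S_{ij} \doteq ((\Q^\top \mV)_{ij})^2\). Since \(\Q^\top \mV\) is orthogonal, \(S = (\Q^\top\mV)^{\odot 2}\) is doubly stochastic: every row and column sums to \(1\). Because \(f\) is concave with \(f(0)=0\) (hence \(f\) is superadditive on \([0,\infty)\), equivalently \(x \mapsto f(x)/x\) is non-increasing, which also gives \(f(\alpha x) \ge \alpha f(x)\) for \(\alpha \in [0,1]\)), I would apply Jensen's inequality to each diagonal entry: \(f\big(\sum_j S_{ij}\lambda_j\big) \ge \sum_j S_{ij} f(\lambda_j)\). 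Summing over \(i\) and using column-stochasticity \(\sum_i S_{ij} = 1\) yields \(\sum_i f((\Q^\top\N\Q)_{ii}) \ge \sum_j f(\lambda_j)\), which is the claim. (One must be slightly careful that Jensen applies to \(f\) on \([0,\infty)\) as a genuine concave function; since the \(\lambda_j \ge 0\) and \(S_{ij}\) are convex weights, this is fine.)

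For the equality characterization: equality for a diagonalizing \(\Q\) is immediate, since then \(\Q^\top\N\Q\) is diagonal with the eigenvalues of \(\N\) on the diagonal (after the rank reduction above), and both sides equal \(\sum_i f(\lambda_i(\N))\). For the converse when \(f\) is strictly concave, equality forces each per-\(i\) Jensen inequality to be an equality, which (strict concavity) forces, for each \(i\), all \(\lambda_j\) with \(S_{ij} > 0\) to coincide — i.e. \(S\) only mixes equal eigenvalues. I would then argue that this means \(\Q^\top \mV\) is block-diagonal (up to permutation) with blocks aligned to the eigenspaces of \(\N\), whence \(\Q^\top \N \Q = \Q^\top \mV \mLambda \mV^\top \Q\) is block-diagonal with each block a scalar multiple of an orthogonal-conjugation of \(\lambda \I\), i.e. itself \(\lambda \I\) on that block — so \(\Q^\top\N\Q\) is diagonal. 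I expect this equality-case bookkeeping (tracking how the eigenvalue-degeneracy structure of \(\M\) propagates through \(S\) to conclude \(\Q^\top\M\Q\) is diagonal, handling repeated eigenvalues and the rank-deficient directions cleanly) to be the main obstacle; the inequality itself is a short Jensen argument once the doubly-stochastic reformulation is in place.
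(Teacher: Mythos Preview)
Your proposal is correct and follows essentially the same approach as the paper: express each diagonal entry \((\Q^{\top}\M\Q)_{ii}\) as a convex combination of the eigenvalues of \(\M\) via the orthostochastic matrix \(S = (\Q^{\top}\mV)^{\odot 2}\), apply Jensen's inequality to the concave \(f\), and sum using double stochasticity; the equality analysis (strict concavity forces each row of \(S\) to mix only equal eigenvalues, hence \(\Q^{\top}\M\Q\) is diagonal) is likewise the same. The only organizational difference is that you first reduce to the square case \(p=d\) by noting \(\M\) and \(\Q^{\top}\M\Q\) share nonzero eigenvalues, whereas the paper works directly with rectangular \(\Q\) by extending the eigenbasis and tracking the block structure of \(\U^{\top}\Q\); your reduction is slightly cleaner but the core argument is identical.
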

\begin{proof}
    First, we have that \(\rank(\M) \leq \rank(\Q) = p\). Thus, let \(\M = \U\Lam\U^{\top}\) be a spectral decomposition of \(\M\), where \(\Lam \in \R^{d \times d}\) is diagonal with non-negative entries \(\lambda_{i}(\M)\) for \(i \in [d]\), and \(\U \in \orthogonal(d)\). In particular, we can write 
    \begin{equation}
        \Lam = \begin{bmatrix}\Lam_{p} & \0 \\ \0 & \0\end{bmatrix}, \qquad \U = \begin{bmatrix}\U_{p} & \tilde{\U}\end{bmatrix}
    \end{equation}
    where \(\Lam_{p} \in \R^{p \times p}\) is a diagonal matrix with non-negative entries \(\lambda_{i}(\M)\) for \(i \in [p]\), \(\U_{p} \in \orthogonal(d, p)\) and \(\tilde{\U}_{p} \in \orthogonal(d, d - p)\), such that \(\image(\Q) = \image(\U_{p})\), so that \(\image(\M) \subseteq \image(\U_{p})\) and \(\image(\Q)\) is orthogonal (in the sense of linear subspaces) to \(\image(\tilde{\U})\). 

    We begin by noting the simple fact that if \(\Q\) diagonalizes \(\M\), then we have \(\Q^{\top}\M\Q = \Perm^{\top}\Lam_{p}\Perm\) for some permutation matrix \(\Perm \in \orthogonal(p)\). This gives 
    \begin{align}
        \sum_{i = 1}^{p}f((\Q^{\top}\M\Q)_{ii}) 
        &= \sum_{i = 1}^{p}f((\Perm^{\top}\Lam_{p}\Perm)_{ii}) = \sum_{i = 1}^{p}f((\Lam_{p})_{ii}) = \sum_{i = 1}^{p}f(\lambda_{i}(\M)) \\ 
        &= \sum_{i = 1}^{d}f(\lambda_{i}(\M)) = F(\M),
    \end{align}
    showing that the inequality \eqref{eq:generalized_diagonal_inequality} is met with equality for a \(\Q\) which diagonalizes \(\M\).

    We now show that the inequality holds for a general \(\Q \in \orthogonal(d, p)\) such that \(\image(\M) \subseteq \image(\Q)\). Define \(\W = \U^{\top}\Q \in \R^{d \times p}\). Then we have 
    \begin{equation}
        \W = \U^{\top}\Q = \begin{bmatrix}\U_{p}^{\top} \\ \tilde{\U}^{\top}\end{bmatrix}\Q = \begin{bmatrix}\U_{p}^{\top}\Q \\ \tilde{\U}^{\top}\Q\end{bmatrix} = \begin{bmatrix}\U_{p}^{\top}\Q \\ \0\end{bmatrix}.
    \end{equation}
    Now we have 
    \begin{equation}
        \W^{\top}\W = \Q^{\top}\U\U^{\top}\Q = \Q^{\top}\I\Q = \Q^{\top}\Q = \I,
    \end{equation}
    so that \(\W\) has orthonormal columns, i.e., \(\W \in \orthogonal(d, p)\). In particular, writing this in a different way,
    \begin{equation}
        \I = \W^{\top}\W = \begin{bmatrix}\Q^{\top}\U_{p} & \0\end{bmatrix}\begin{bmatrix}\U_{p}^{\top}\Q \\ \0\end{bmatrix} = \Q^{\top}\U_{p}\U_{p}^{\top}\Q = (\U_{p}^{\top}\Q)^{\top}(\U_{p}^{\top}\Q),
    \end{equation}
    so that \(\U_{p}^{\top}\Q \in \orthogonal(p)\). %
    
    Now note that we have 
    \begin{align}
        (\Q^{\top}\M\Q)_{ii} 
        &= ((\U\W)^{\top}\M(\U\W))_{ii} = (\W^{\top}\U^{\top}\M\U\W)_{ii} = (\W^{\top}\Lam\W)_{ii} \\ 
        &= \sum_{j = 1}^{d}\lambda_{j}(\M)\W_{ji}^{2}.
    \end{align}
    Since \(\W\) has orthonormal columns, \(\sum_{j = 1}^{d}\W_{ji}^{2} = 1\). Thus \((\Q^{\top}\M\Q)_{ii}\) is a convex combination of the eigenvalues of \(\M\), including some zeros.  Using the concavity of \(f\), this yields
    \begin{align}
        \sum_{i = 1}^{p}f((\Q^{\top}\M\Q)_{ii})
        &= \sum_{i = 1}^{p}f\left(\sum_{j = 1}^{d}\lambda_{j}(\M)\W_{ji}^{2}\right) \\ 
        &\geq \sum_{i = 1}^{p}\sum_{j = 1}^{d}\W_{ji}^{2}f(\lambda_{j}(\M)) \\ 
        &= \sum_{j = 1}^{d}f(\lambda_{j}(\M))\sum_{i = 1}^{p}\W_{ji}^{2} = \sum_{j = 1}^{p}f(\lambda_{j}(\M))\sum_{i = 1}^{p}\W_{ji}^{2} \\ 
        &= \sum_{j = 1}^{p}f(\lambda_{j}(\M))\sum_{i = 1}^{p}(\U_{p}^{\top}\Q)_{ji}^{2} \\ 
        &= \sum_{j = 1}^{p}f(\lambda_{j}(\M)) = \sum_{j = 1}^{d}f(\lambda_{j}(\M)) = F(\M).
    \end{align}
    In particular, \(\sum_{i = 1}^{p}(\U_{p}^{\top}\Q)_{ji}^{2} = 1\) because \(\U_{p}^{\top}\Q\) is an orthogonal matrix, thus having unit-norm rows and columns. This proves the inequality \eqref{eq:generalized_diagonal_inequality}. 

    Regarding the equality case, we note that the only inequality used to derive \eqref{eq:generalized_diagonal_inequality} is the concavity condition on \(f\). If \(f\) is strictly concave, then this inequality is met with equality if and only if, for each \(i\), there exists \(j^{\star}\) such that \(\sum_{j = 1}^{d}\lambda_{j}(\M)\W_{ji}^{2} = \lambda_{j^{\star}}(\M)\). Namely, the inequality is met with equality if and only if for each \(i\), if \(\W_{j_{1}i} \neq 0\) and \(\W_{j_{2}i} \neq 0\) then \(\lambda_{j_{1}}(\M) = \lambda_{j_{2}}(\M)\). For example, if \(\M\) has distinct eigenvalues then \(\W\) is a truncated permutation matrix. In general, however, a \(\W\) with these properties implies that \(\Q = \U\W\) diagonalizes \(\M\). This shows that if \(f\) is strictly concave then the inequality in \eqref{eq:generalized_diagonal_inequality} is met with equality if and only if \(\Q\) diagonalizes \(\M\).
\end{proof}

\section{Extension to Causal \ours{} Attention}\label{sec:causal_tost}

To enable auto-regressive training / inference similar to GPT, the \ours{} attention mechanism illustrated in the main text needs to be adapted to a \textit{causal} version. Namely, for the $t$-th token, it can only attend to tokens from earlier timesteps $\{1, \dots, t-1\}$. 

In conventional self-attention mechanism, such operation can be achieved by applying a causal mask on the attention matrix $A \in \R^{T \times T}$, where $T$ is the total number of tokens. The causal mask $M$ is a $T$-by-$T$ lower-triangular matrix of binary values (i.e. 0 or 1). The resulting causal attention matrix $M \odot A$ is therefore also lower-triangular and prevents any token from attending to future tokens. 

In \ours{} attention, such operation however requires some tweaking. Let us first revisit the formulation of the \ours{} attention mechanism:
\begin{equation} 
\z_j^{\ell+1} = \z_j^{l} - \frac{\tau}{n} \sum_{k=1}^K \Pi_{j,k} \U_{k} \Diag \left( \nabla f\left[\frac{1}{n_k} (\U_{k}^{\top} \Z^{\ell})^{\odot 2} \p_k \right] \right) \U_{k}^{\top} \z_j^{\ell}, \quad \forall j\in[n].
\end{equation}

One can notice that to enforce causality between tokens, the key is to make sure for generating token $\z_j^{\ell+1}$, it only relies on statistics computed from earlier tokens $\{\z_1^l,\dots,\z_{j-1}^l\}$. Assume we have already computed $\P$\footnote{Note $\P$ itself may be causal depending on how it is computed. For now, just assume we have computed $\P$ and we will explain how to handle $\P$ with a more complex initialization scheme in the next section.}, the causal version of \ours{} becomes:

\begin{equation} \label{eq:tost_causal_attention}
\z_j^{\ell+1} = \z_j^{l} - \frac{\tau}{j} \sum_{k=1}^K \Pi_{j,k} \U_{k} \Diag \left( \nabla f\left[\frac{1}{n_{j,k}} (\U_{k}^{\top} \Z^{\ell})^{\odot 2} \cdot \I_{j,n} \cdot \p_k \right] \right) \U_{k}^{\top} \z_j^{\ell}, \quad \forall j\in[n].
\end{equation}
where we define
\begin{equation} \label{eq:eye_j_n_def}
\I_{j,n} = \Diag (\underbrace{1,\dots,1}_{j},\underbrace{0,\dots,0}_{n-j}).
\end{equation}
and \(n_{j,k} \doteq \langle \I_{j,n} \cdot \p_{k}, \1\rangle\).
Notice here we have effectively restricted the attention operator to only use the first $j$ tokens, preventing influence from future tokens. This mechanism, named Causal-\ours{}, could be efficiently implemented in practice with a cumulative sum (i.e. \texttt{cumsum}) operation. This is an intuitive way to encode causality into our attention mechanism, in contrast to the XCA opeartor in \xcit, which does not have a direct way of implementing causal attention. We will introduce more technical details on the implementation of this causal variant in the next section.

\section{Experiment Details}\label{sec:app_exp_detail}
Here we present additional settings and results for our implementation and experiments.

\subsection{Implementation Details}\label{subsec:imple_detail}
For ease of optimization and implementation, below we make a few modifications to the conceptual architecture presented in~\Cref{sec:method}.
\paragraph{Overparameterization in \texttt{TSSA}.}
     First notice we can express the \texttt{TSSA} term in~\Cref{eq:tssa} as:
    \begin{equation}
        \TSSA{\Z \mid \{\U_{k}\}_{k = 1}^{K}} = -\frac{\tau}{n} \begin{bmatrix}\U_{1}, \dots, \U_{K}\end{bmatrix} 
        \begin{bmatrix}
            \D(\Z, \p_{1} \mid \U_{1})\U_{1}^{\top}\Z\Diag(\p_{1})\\
            \vdots\\
            \D(\Z, \p_{K} \mid \U_{K})\U_{K}^{\top}\Z\Diag(\p_{K})
        \end{bmatrix},
    \end{equation}
    where \(\D\) is defined in \eqref{eq:D_def}. Following \cite{yu2024white}, we replace the term $(\tau/n) \begin{bmatrix}\U_{1}, \dots, \U_{K}\end{bmatrix}$ with a learnable matrix $\W \in \R^{d \times pK}$. Expanding the definition of \(\D\), the \texttt{TSSA} operator becomes:
    \begin{equation}
        \TSSA{\Z \mid \{\U_{k}\}_{k = 1}^{K}} 
        = \scalebox{0.95}{\(\displaystyle - \W
        \begin{bmatrix}
            \Diag(\nabla f[(\U_{1}^{\top}\Z)^{\odot 2}\p_{1}/\langle \p_{1}, \1\rangle])\U_{1}^{\top}\Z\Diag(\p_{1})\\
            \vdots\\
            \Diag(\nabla f[(\U_{K}^{\top}\Z)^{\odot 2}\p_{K}/\langle \p_{K}, \1\rangle])\U_{K}^{\top}\Z\Diag(\p_{K})
        \end{bmatrix}.\)}
    \end{equation}

    \paragraph{Absorbing constant coefficients.}
     We absorb the coefficients $d / \epsilon^{2}$ in $\nabla f(x) = (d / \epsilon^{2}) (1+ (d / \epsilon^{2}) x)^{-1}$ in~\Cref{eq:tssa} into the trainable parameter $\{\U_{k}\}_{k = 1}^{K}$. Therefore, the non-linear activation function becomes $\nabla f(x) = (1 + x)^{-1}$. 

     \paragraph{Estimating membership $\P$.}
     When estimating $\P$ as in \Cref{eq:v}, we perform $\ell_2$ normalization on the projected tokens $\U_k^{\top}\Z \in \R^{p \times n}$ along each feature dimension, for each head. That is, we restrict the norm of each row of the projected tokens to be 1.
    This normalization scheme is \textit{only} applied to tokens in the computation of $\P$ and does not impact anything else in~\Cref{eq:tssa}. We found this design to substantially improve and stabilize the training of \ours{} models. 
To express the aforementioned normalization, denote $i$-th row of $\U_k^{\top}\Z$ as $(\U_k^{\top}\Z)_{i,:}$, $\y_k = \begin{bmatrix} 1/\|(\U_k^{\top}\Z)_{1,:}\|_2 \\ \vdots \\ 1/\|(\U_k^{\top}\Z)_{p,:}\|_2\end{bmatrix}$, $\u_k^j \doteq \U_k^{\top}\z_j \odot \y_k $. Then, 
\begin{equation}
\label{eq:v3}
\scalebox{0.875}{\(\displaystyle \P = \begin{bmatrix} \boldsymbol{\nu}(\z_1 \mid \{\U_{k}\}_{k = 1}^{K})^\top \\ \vdots \\ \boldsymbol{\nu}(\z_n \mid \{\U_{k}\}_{k = 1}^{K})^\top \end{bmatrix}, \quad
\text{where} \quad 
\boldsymbol{\nu}(\z_j \mid \{\U_{k}\}_{k = 1}^{K}) \doteq \operatorname{softmax}\left( \frac{1}{2\eta} \begin{bmatrix} \|  \u_1^j \|_2^2 \\ \vdots \\ \| \u_K^j \|_2^2 \end{bmatrix} \right), \quad \forall j \in [n].\)}
\end{equation}

In the causal variant of \texttt{TSSA} operator as previously introduced (\Cref{sec:causal_tost}), this normalization scheme means that, each token has its own corresponding group assignment matrix $\P$ that only relies on earlier tokens for normalization. Denote $\P^j$ a $p$ by $j$ group assignment matrix corresponding to the $j$-th token and $\y_k^j = \begin{bmatrix} 
1/\|(\I_{j,n} \cdot \U_k^{\top}\Z^{\ell})_{1,:}  \|_2 \\ \vdots \\ 1/\|(\I_{j,n} \cdot \U_k^{\top}\Z^{\ell})_{p,:} \|_2
\end{bmatrix}$, where recall $\I_{j,n} = \Diag (\underbrace{1,\dots,1}_{j},\underbrace{0,\dots,0}_{n-j}).$ Then, \begin{equation}
\label{eq:v4}
\scalebox{0.875}{\(\displaystyle \P^j = \begin{bmatrix} \boldsymbol{\nu^j}(\z_1 \mid \{\U_{k}\}_{k = 1}^{K})^\top \\ \vdots \\ \boldsymbol{\nu^j}(\z_j \mid \{\U_{k}\}_{k = 1}^{K})^\top \end{bmatrix}, \quad
\text{where} \quad 
\boldsymbol{\nu^j}(\z_i \mid \{\U_{k}\}_{k = 1}^{K}) \doteq \operatorname{softmax}\left( \frac{1}{2\eta} \begin{bmatrix} \| \U_1^{\top}\z_i \odot \y_1^j \|_2^2 \\ \vdots \\ \|\U_K^{\top}\z_i \odot \y_K^j \|_2^2 \end{bmatrix} \right), \quad \forall j \in [n].\)}
\end{equation}
Similar to our analysis in \Cref{sec:causal_tost}, this normalization scheme can be implemented using the \texttt{cumsum} operation. However, one can notice that a naive implementation would require an entirely different $\P^j$ to be computed every time a new token arrives at time step $t$, as each column $k \leq t-1$ corresponding to earlier time steps needs to be recomputed. This would lead to quadratic computational and memory complexity.
In practice, instead of recomputing earlier columns, we find that using a zero-initialized learnable additive bias along both the head and the sequence dimensions before the softmax operator is sufficient. Denote the bias $\mathbf b^j_k$ for $k$-th head and $j$-th token. Then, 
\begin{equation}
\label{eq:v5}
\scalebox{0.875}{\(\displaystyle \P = \begin{bmatrix} \boldsymbol{\nu}(\z_1 \mid \{\U_{k}\}_{k = 1}^{K})^\top \\ \vdots \\ \boldsymbol{\nu}(\z_n \mid \{\U_{k}\}_{k = 1}^{K})^\top \end{bmatrix}, \quad
\text{where} \quad 
\boldsymbol{\nu}(\z_j \mid \{\U_{k}\}_{k = 1}^{K}) \doteq \operatorname{softmax}\left( \frac{1}{2\eta} \begin{bmatrix} \| \U_1^{\top}\z_j \odot \y_1^j \|_2^2 + \mathbf b^j_1 \\ \vdots \\ \|\U_K^{\top}\z_j \odot \y_K^j \|_2^2 + \mathbf b^j_K \end{bmatrix} \right), \quad \forall j \in [n].\)}
\end{equation}
Therefore, we still have \Cref{eq:tost_causal_attention} as the causal variant of \ours{}.

We provide a PyTorch-style pseudocode in~\Cref{sec:pseudocode} that reflects the architecture modifications introduced above. In particular, ~\Cref{algo:attn-block} and ~\Cref{alg:tssa_causal} implement the \texttt{TSSA} attention layer and its causal variant, respectively.

\subsection{Model Configurations}\label{subsec:model_config}
In the visual classification experiments conducted in~\Cref{sec:experiments}, we focus on three types of models: the proposed \ours{}, \xcit{}, and \vit{} models. As detailed in the main text, our implementation of \ours{} inherits choices of other non-attention layers from \xcit{}. 
One important modification we make is the removal of Local Patch Interaction (LPI) module, which is designed to encourage explicit between-token interactions and does not align with our theoretical derivation. Therefore, we prevent any form of explicit interactions among tokens by removing the LPI block. We remove this module for \xcit{} as well for fair comparison. Hence, we train all \ours{} and \xcit{} models reported in this paper. One notable design choice of \xcit{} is the addition of two class attention layers after the last \xcit{} block. The goal of these layers is to aggregate features from the processed patch tokens and then write to a learnable \texttt{[CLS]} token via conventional self-attention. These layers perform attention only between the \texttt{[CLS]} token and patch tokens, thus incurring $\mathcal{O}(n)$ computational complexity.
For \vit{} models, we directly use the checkpoints and results reported in \cite{touvron2021training}. Specifically, we use the \vit-S(mall) and \vit-B(ase) models, as they contain parameters of similar size as \xcit{}-S(mall) and \xcit{}-M(edium), respectively.
To summarize, we present details about the configurations of these models in~\Cref{tab:model_config}. 

In the causal language modeling experiments, we train the causal variant of \ours{} of three different sizes and closely follow GPT-2 \citep{radford2019language} family of models. We present their configurations in~\Cref{tab:lm_model_config}.
Please see~\Cref{subsec:train_setting} for other training details. One thing to note is that we adopt the same hyperparameters as in~\citep{Karpathy2022} for all sizes due to the limitation of computing resources. A more careful tuning certainly could lead to better performance for \ours{}.

\begin{table*}[t]
\centering
\caption{\small \textbf{Configurations for the transformer models in image classification experiments.} }
\label{tab:model_config}
    \setlength{\tabcolsep}{13.6pt}
\resizebox{0.98\textwidth}{!}{%
\begin{tabular}{@{}lccc|cc|cc@{}}
\toprule
 & \ours{-T} & \ours{-S} & \ours{-M} &   XCiT-S &  XCiT-M &  \vit-S & \vit-B\\ 
\toprule
 \# parameters & 5.8M & 22.6M & 68.1M &  24.9M & 80.2M  & 22.1M & 86.6 M \\
 \# attention heads $K$ & 4 & 8 & 8 & 8 & 8 & 6 & 12\\
 \# layers $L$ & 12 & 12 & 24 & 12 & 24 & 12 & 12\\
 \# feature dimension $d$ & 192 & 384 & 512 & 384 & 512 & 384 & 768\\
 \# head dimension $p$ & 48 & 48 & 64 & 48 & 64 & 64 & 64\\
 \bottomrule
\end{tabular}%
}
\end{table*}

\begin{table*}[t]
\centering
\caption{\small \textbf{Configurations for \ours{} models in language modeling experiments.} }
\label{tab:lm_model_config}
    \setlength{\tabcolsep}{13.6pt}
\resizebox{0.6\textwidth}{!}{%
\begin{tabular}{@{}lccc@{}}
\toprule
 & \ours{-Base} & \ours{-Medium} & \ours{-Large} \\ 
\toprule
 \# parameters & 110M & 304M & 655M \\
 \# attention heads $K$ & 12 & 16 & 20 \\
 \# layers $L$ & 12 & 24 & 36 \\
 \# feature dimension $d$ & 768 & 1024 & 1280 \\
 \# head dimension $p$ & 64 & 64 & 64 \\
 \bottomrule
\end{tabular}%
}
\end{table*}

\subsection{Training Setup}\label{subsec:train_setting}
\paragraph{Pre-training on ImageNet-1k.} We train our models using the AdamW optimizer with a learning rate of $2\times 10^{-4}$ for 400 epochs throughout our pre-training experiments. We configure our batch size to be 2048 for all our training experiments. All images are reshaped into resolutions of $224 \times 224$ and we tokenize each patch of size $16 \times 16$ into patch embeddings. For the other hyperparameters including data augmentation strategies, we adopt the \textit{exact same} recipe as in \citep{ali2021xcit}. Please refer to this prior work for details. We conduct all pre-training experiments on 128 NVIDIA V100 GPUs.

\paragraph{Fine-tuning.} We conduct transfer learning experiments by using the pretrained \ours{}, \xcit{}, and \vit{} models as initialization and fine-tuning them on the following target datasets: CIFAR10/100, Oxford Flowers-102, Oxford-IIIT-Pets. We use batch size of 256 throughout our fine-tuning experiments. For other settings, we adopt the same hyperparameter setup and pipeline as in \citep{yu2024white}. Regarding computational resources, we conducted experiments on two NVIDIA RTX 4090 GPUs. 

\paragraph{Long-Range Arena (LRA) Benchmark.} We implement \ours{} on top of an off-the-shelf PyTorch version\footnote{\url{https://github.com/facebookresearch/mega}} of the LRA benchmark and assess \ours{} on tasks of long list operations (ListOps; \citealp{nangia2018listops}) , byte-level text classification (Text; \citealp{maas2011learning}), document retrieval (Retrieval; \citealp{radev2013acl}), image classification (Image; \citealp{krizhevsky2009learning}), Pathfinder \citep{linsley2018learning}, and its extremely long version (Path-X; \citealt{tay2021long}). Due to the reported issues\footnote{\url{https://github.com/facebookresearch/mega/issues/3}} with the Pathfinder task, we use a different Pytorch implementation\footnote{\url{https://github.com/pkuzengqi/Skyformer}} of LRA to evaluate \ours{} on the task of Pathfinder.  The hyperparameters of \ours{} are summarized in \Cref{tab:model_config_lra}. 

\begin{table}[t]
\centering
\caption{\small \textbf{Configurations for our \ours{} models in experiments on the Long Range Arena benchmark.}}
\label{tab:model_config_lra}
\resizebox{\textwidth}{!}{%
\begin{tabular}{@{}lcccccccccccc@{}}
\toprule
Task & \begin{tabular}[c]{@{}c@{}}\# layers\\  $L$\end{tabular} & \begin{tabular}[c]{@{}c@{}}feature dim.\\  $d$\end{tabular} & \begin{tabular}[c]{@{}c@{}}\# heads\\  $K$\end{tabular} & MLP dim. & pre-norm & \begin{tabular}[c]{@{}c@{}}batch\\ size\end{tabular} & \begin{tabular}[c]{@{}c@{}}learning\\ rate\end{tabular} & dropout & \begin{tabular}[c]{@{}c@{}}weight\\ decay\end{tabular} & \begin{tabular}[c]{@{}c@{}}total\\ iter.\end{tabular} & \multicolumn{1}{l}{\begin{tabular}[c]{@{}l@{}}warmup\\ iter.\end{tabular}} & \multicolumn{1}{l}{\begin{tabular}[c]{@{}l@{}}learning\\ rate decay\end{tabular}} \\ \midrule
ListOps & 6 & 512 & 8 & 2048 & True & 64 & 1e-4 & 0.1 & 1e-2 & 90000 & 3000 & inverse sqrt. \\
Text & 4 & 256 & 4 & 1024 & True & 32 & 1e-4 & 0.1 & 1e-2 & 25000 & 10000 & linear \\
Retrieval & 6 & 128 & 4 & 512 & True & 64 & 1e-4 & 0.1 & 4e-2 & 91960 & 10000 & linear \\
Image & 8 & 64 & 8 & 128 & True & 64 & 1e-4 & 0.0 & 2e-2 & 180000 & 9000 & linear \\
Pathfinder & 2 & 64 & 2 & 128 & False & 256 & 2e-4 & 0.1 & 0 & 200000 & 312 & linear \\ \bottomrule
\end{tabular}%
}
\end{table}

\paragraph{Causal Language Modeling.} 

We autoregressively train \ours{} models of sizes corresponding to GPT2-Base, Medium and Large following~\cite{radford2019language}, by substituting the self-attention operator with our proposed causal version of TOST, with the same pre-processing and post-processing steps, on the OpenWebText dataset~\citep{Gokaslan2019OpenWeb}. Our implementation is based on~\cite{Karpathy2022}. We use a context length of 1024, and optimize the models using AdamW optimizer~\citep{loshchilov2019decoupledweightdecayregularization} for 600,000 iterations, with batch size 480. We set the learning rate to 0.0006 with 2000 warm up iterations and cosine decay, weight decay to 0.1. For more details, we refer readers to~\cite{radford2019language,Karpathy2022}. 
\section{Additional Experimental Results}
\subsection{Ablation Studies on Model Design Choices.}
Recall that we have made a few design choices for our implementation of \ours{} as detailed in~\Cref{subsec:imple_detail}.  Additionally, differently from \crate{} \citep{yu2024white}, we use a generic two-layer MLP instead of the sparsity-promoting ISTA block in our implementation. In particular, we find the following two components to have a non-trivial impact on the performance of our proposed model:
\begin{enumerate}
    \item $\ell_2$ normalization when computing the membership matrix $\P$;
    \item MLP (as adopted in \vit, \xcit{}, etc) instead of ISTA (as adopted in \crate{}).
\end{enumerate}
We study how these choices affect our model performance and present our observations in~\Cref{tab:ablations}. Specifically, we vary these design choices and train each model variant on CIFAR10 for 400 epochs. Our results demonstrate the necessity of each component in our model design.
\begin{table}[t]
\centering
\caption{\small \textbf{Model ablation.} Test accuracy on CIFAR10 after 400 epochs for a ToST-S base model with various combinations of $\ell_2$ normalization and non-attention blocks.
}
\label{tab:ablations}
\small
\begin{tabular}{ccc}

            \toprule
                Normalization & Non-Attention Block & Acc.  \\
            \midrule
                Yes & MLP & 91.2 \\
                  Yes & ISTA & 87.4 \\
                  No & MLP & 85.2 \\
                  No & ISTA & 80.6\\
            \bottomrule
        \end{tabular}
\end{table}

\subsection{Additional Comparison of Memory and Time Complexity of Attention Operators}\label{sub:app_memory_time}
Beyond the theoretical comparison of the memory and time complexity of attention operators of different architectures in \Cref{tab:transformer_complexity}, we further conduct experiments to provide empirical evidence by evaluating real world memory cost and inference speed on an NVIDIA H800 GPU. Specifically, we measure inference time and memory footprint for 12 attention layers for each of the tested architectures with batch size of $1$. For each configuration we take $1$ warm-up forward pass and then run $1000$ forward passes to record the average inference time.

\paragraph{Smaller models with $K=8$ heads.} \Cref{fig:mem-time-k8} reports the memory and inference time of attention operators of different transformers, where the y-axes are in log scale for better readability.
The two subplots on the left are the setting shown in \Cref{fig:compute_intro}, where we fix feature dimension $d=384$ and vary the number $n$ of tokens, while for the two subplots on the right, we fix $n=4096$ tokens and vary the feature dimension $d$. Notably, in all cases, our \ours{}'s attention operator has a much lower computational and memory footprint than the attention of other transformers, which includes the common \vit{} transformers as well as the reduced-complexity \xcit{} transformers. For example, note that for $10k$ tokens, \Cref{fig:mem-time-k8} (Left) shows that \ours{} is approximately 10 times faster than \vit{} for inference and uses almost 100 times less GPU memory. This aligns well with our theoretical comparison in \Cref{tab:transformer_complexity}.

\paragraph{Larger models with $K=32$ heads.} We now move to larger models, where we fix $K=32$ heads and $d=4096$ feature dimensions, and vary the number $n$ of tokens. \Cref{fig:mem-time-k32} presents the memory and inference time of attention operators of different transformers, where again the y-axes are in log scale. It can be seen that the same trend (as in the smaller models case) shows up, and \ours{} uses lower memory and inference time than \vit, \crate{}, and \xcit{}.

\paragraph{Causal-\ours{} vs GPT.} 
 Here we compare the memory and time complexity of different architectures on GPT-like auto-regressive generation (i.e. next token prediction) tasks. Following the config of GPT2-base, we fix $K=12$ heads and $d=768$ feature dimensions, and vary the number $n$ of tokens for both architectures. \Cref{fig:compute_intro_causal} presents the memory and inference time of Causal-\ours{} and GPT-2. Similarly, \ours{} uses lower memory and inference time than GPT-2.
 
\begin{figure*}[t]
    \centering
    \begin{subfigure}[t]{0.215\textwidth}
        \centering
        \includegraphics[width=\textwidth, trim={3mm, 3mm, 6.5cm, 0}, clip]{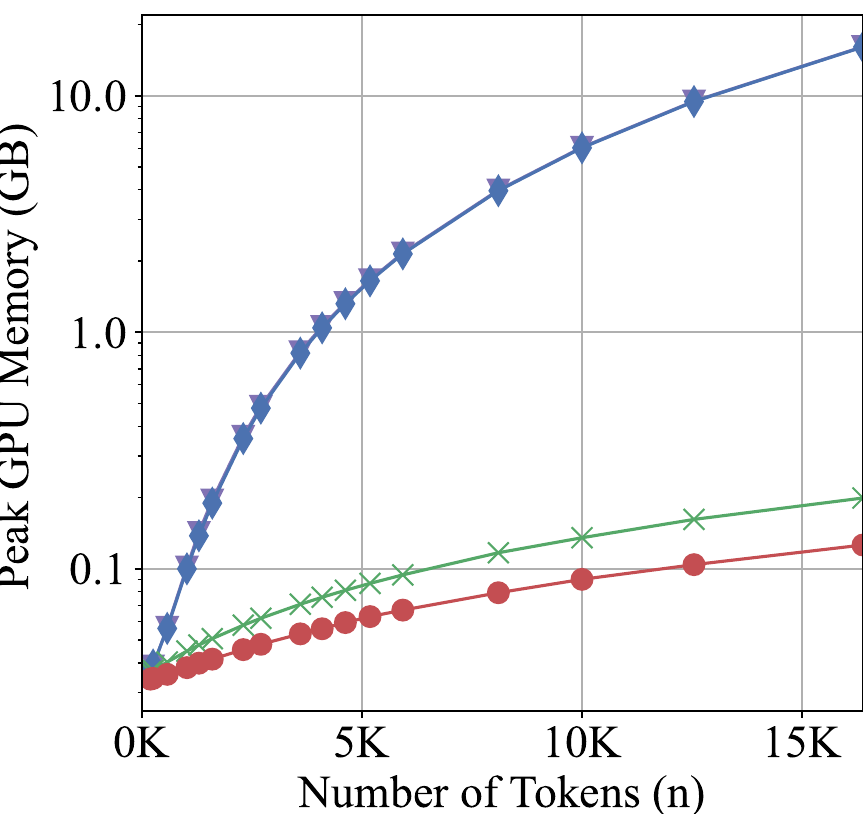}
    \end{subfigure}
    \begin{subfigure}[t]{0.31\textwidth}
        \centering
    \includegraphics[width=\textwidth, trim={3mm, 3mm, 0, 0}, clip]{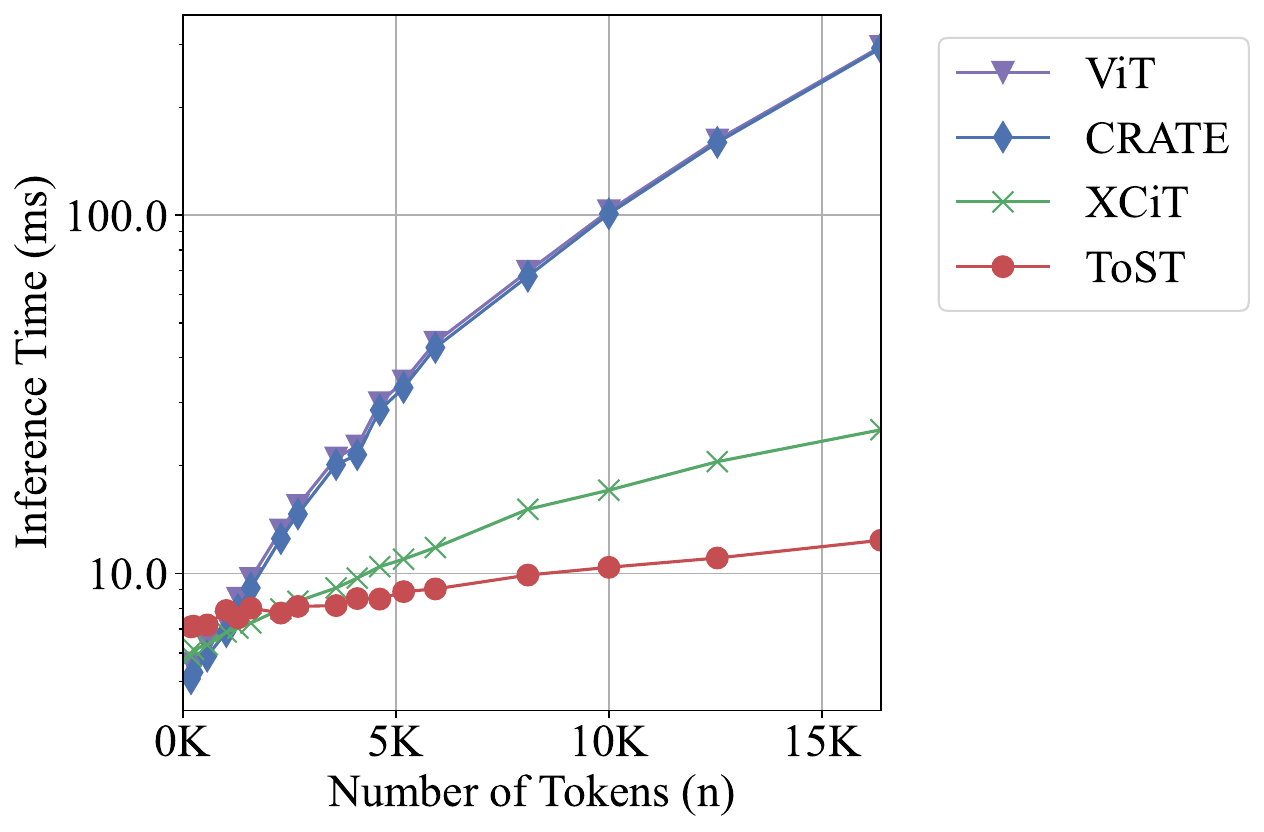}
    \end{subfigure}
    \begin{subfigure}[t]{0.225\textwidth}
        \centering
        \includegraphics[width=\textwidth, trim={3mm, 3mm, 5.6cm, 0}, clip]{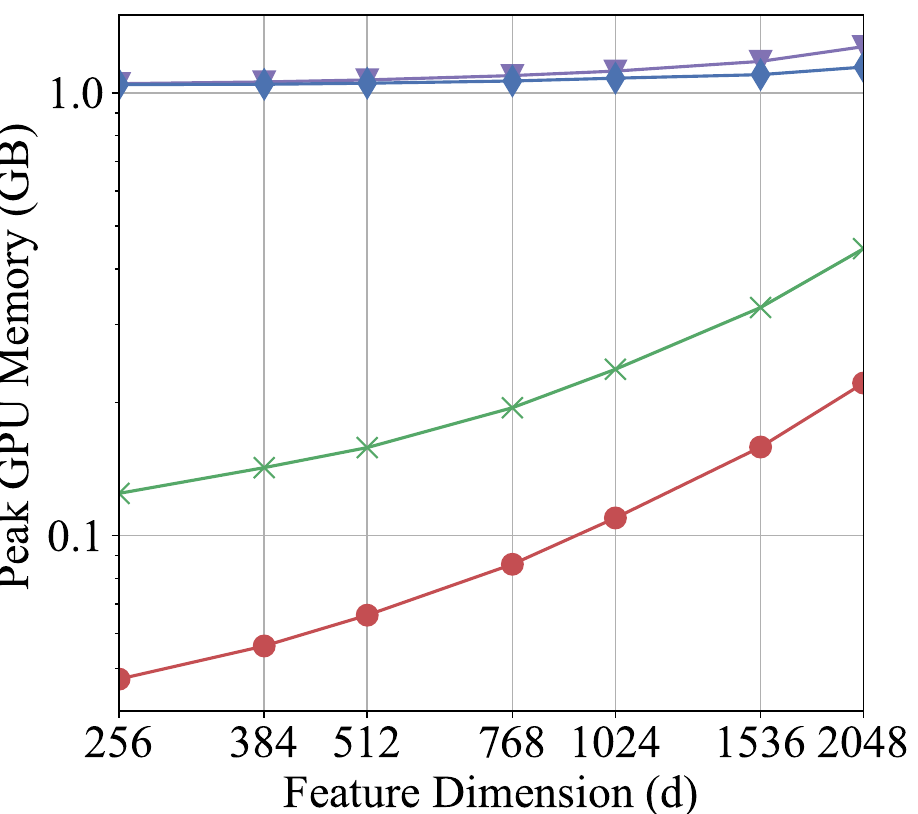}
    \end{subfigure}
    \begin{subfigure}[t]{0.225\textwidth}
        \centering
    \includegraphics[width=\textwidth, trim={3mm, 3mm, 5.6cm, 0}, clip]{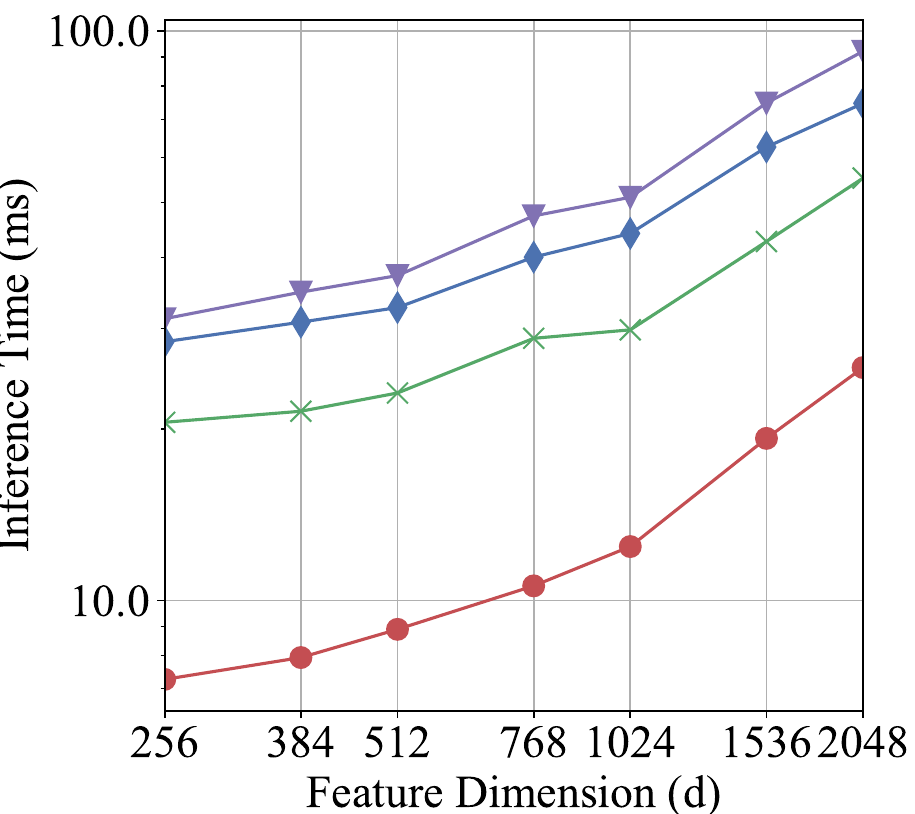}
    \end{subfigure}
    \caption{\small \textbf{Memory usage and inference time of attention operators of different transformer architectures: \vit{} \citep{dosovitskiy2020image}, \crate{} \citep{yu_whitebox_2023}, \xcit{} \citep{ali2021xcit}, and \ours{} (ours). } (\textit{Left}) Each model has $K=8$ heads and $d=384$ feature dimensions at every layer with varying number $n$ of input tokens. (\textit{Right}) Each model has $n=4096$ tokens and $K=8$ heads at every layer with varying feature dimension $d$. The y-axes are in log-scale.
    }
    \label{fig:mem-time-k8}
\end{figure*}

\begin{figure*}[t]
    \centering
    \begin{subfigure}[t]{0.5\textwidth}
        \centering
        \includegraphics[width=0.99\textwidth]{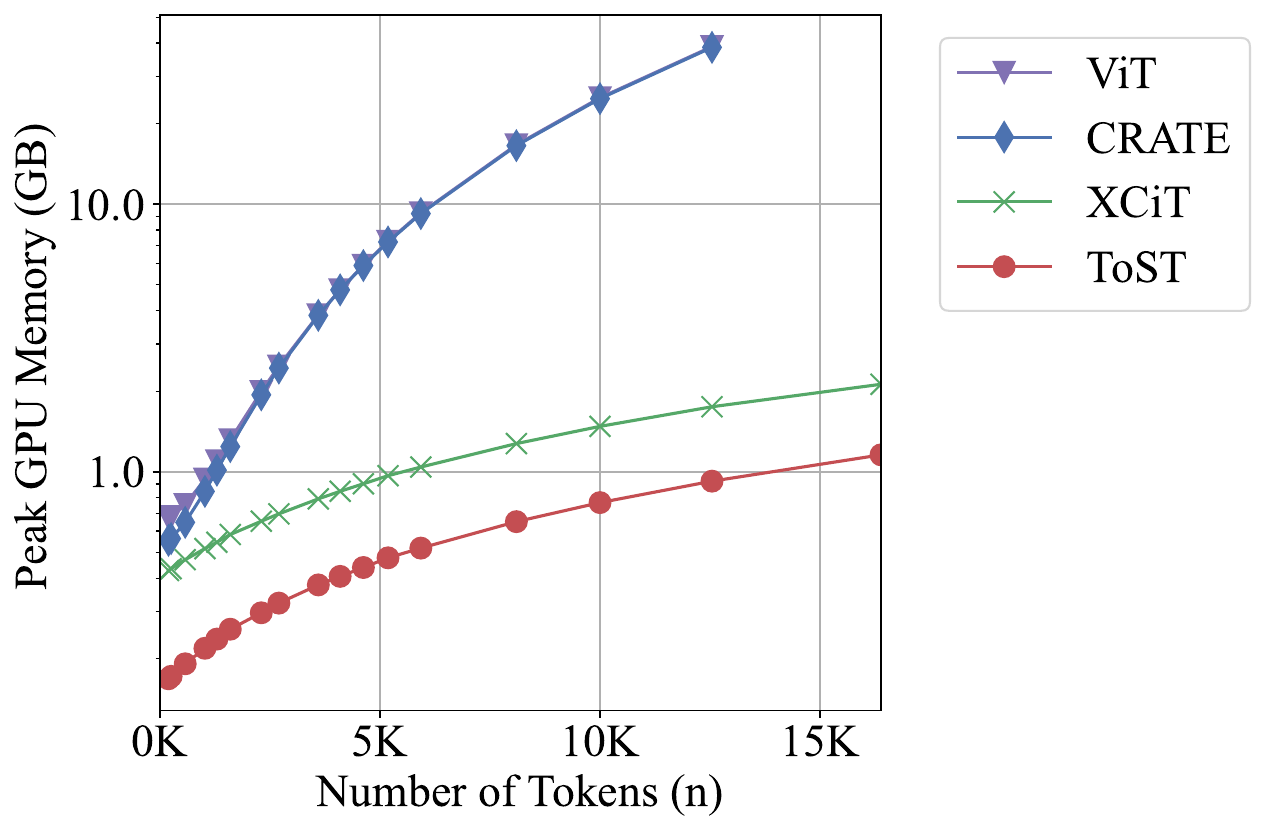}
    \end{subfigure}%
    ~
    \begin{subfigure}[t]{0.5\textwidth}
        \centering
    \includegraphics[width=0.99\textwidth]{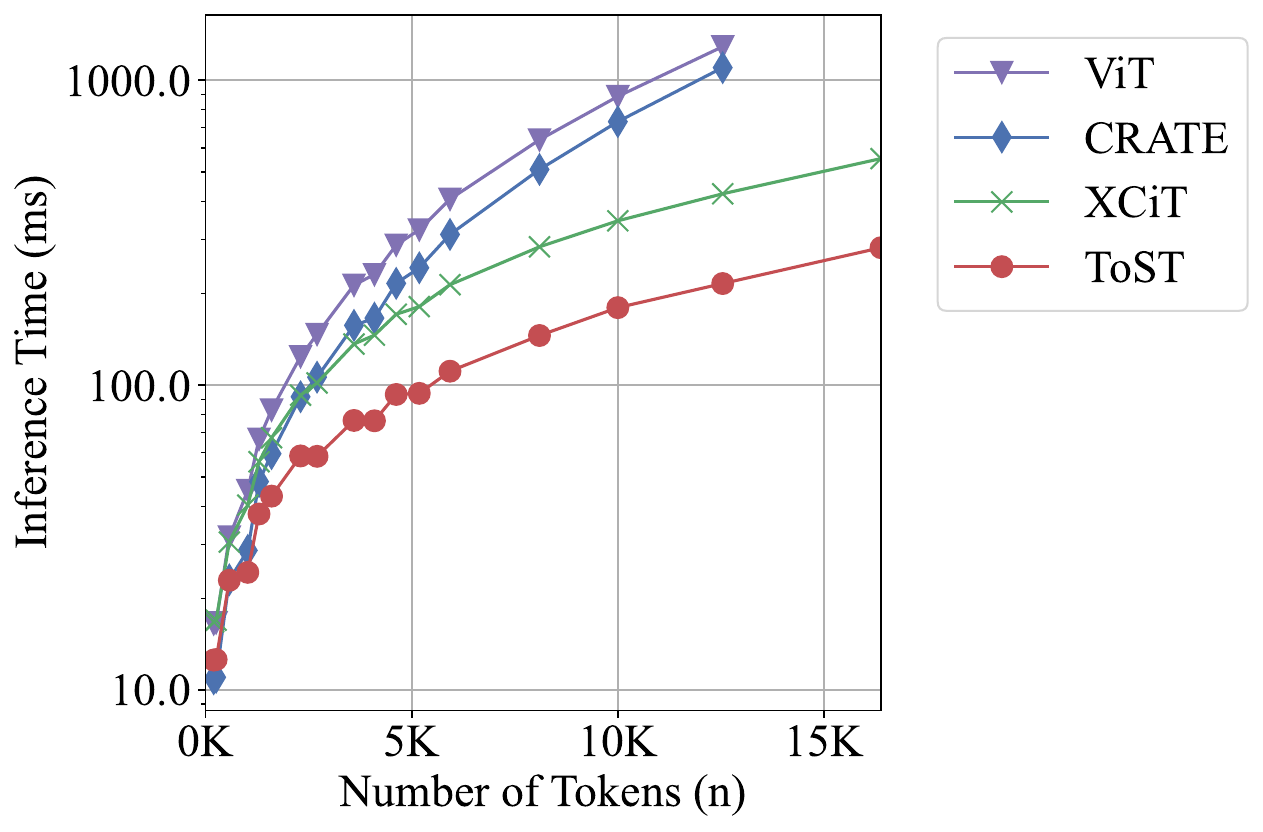}
    \end{subfigure}
    \caption{\small \textbf{Memory usage and inference time of attention operators of different transformer architectures with varying numbers $n$ of input tokens.} Each model has $K=32$ heads and $d=4096$ feature dimension at every layer. The y-axes are in log-scale.
    }
    \label{fig:mem-time-k32}
\end{figure*}

\begin{figure*}[t!]
\centering
\includegraphics[width=0.45\textwidth]{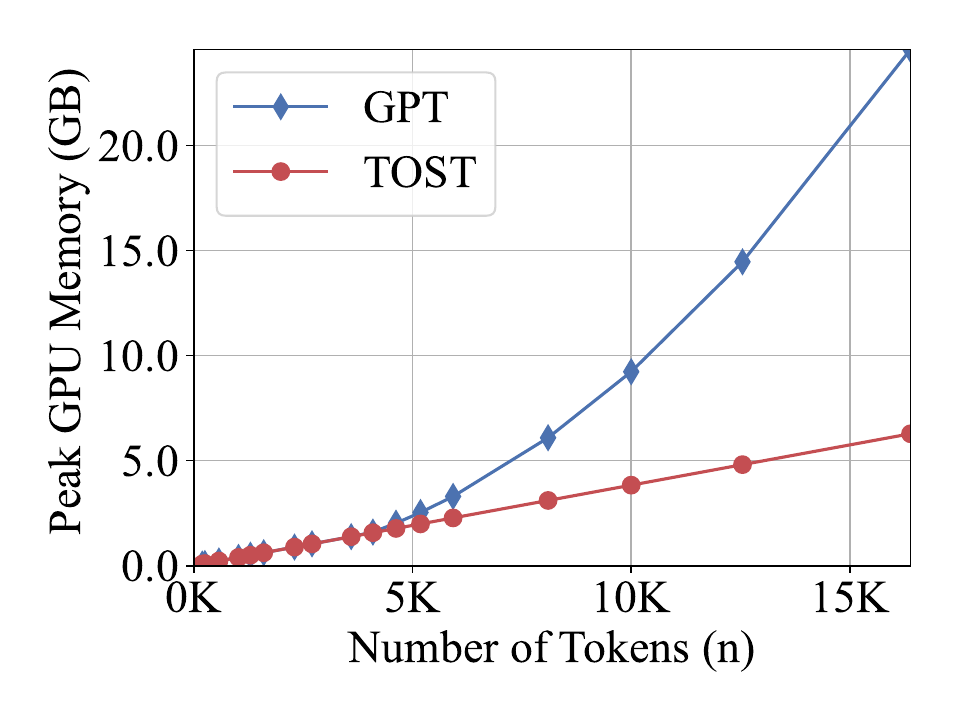}
\includegraphics[width=0.45\textwidth]{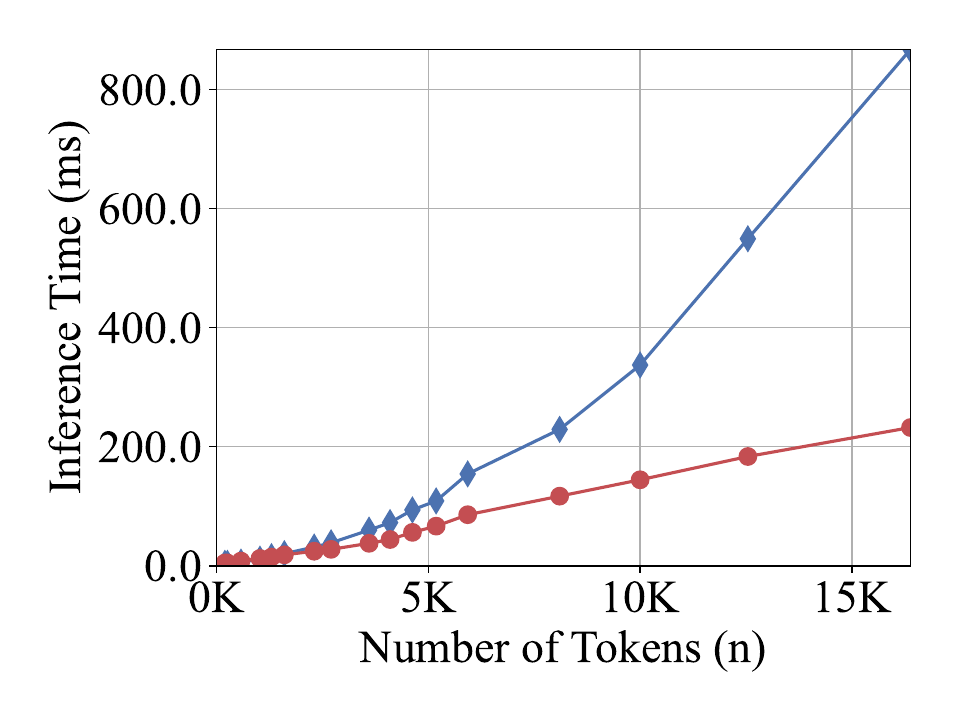}
\caption{\small Causal version of our \ours{} architecture \textbf{is also faster and uses less memory} than standard transformer architectures (e.g. GPT-2) on auto-regressive generation tasks. We adopt the Base model size for both architectures.
}
    \label{fig:compute_intro_causal}
\end{figure*}
 
\subsection{Additional Visualization of Membership Matrix and Attention Maps}
We here visualize the membership matrix $\P$ learned in \ours{-S} in different layers in \Cref{fig:appendix:pi_map}, as well as the \texttt{[CLS]} attention maps from  \ours{-S}, \xcit{}-S \citep{ali2021xcit}, and \vit-S \citep{touvron2021training} in \Cref{fig:appendix:cls_attn} for better interpretation of the semantic clustering property of \ours{}, as discussed in~\Cref{sec:analysis}. 

\begin{figure}[H]
    \centering
    \begin{subfigure}[t]{0.5\textwidth}
        \centering
        \includegraphics[width=0.95\textwidth]{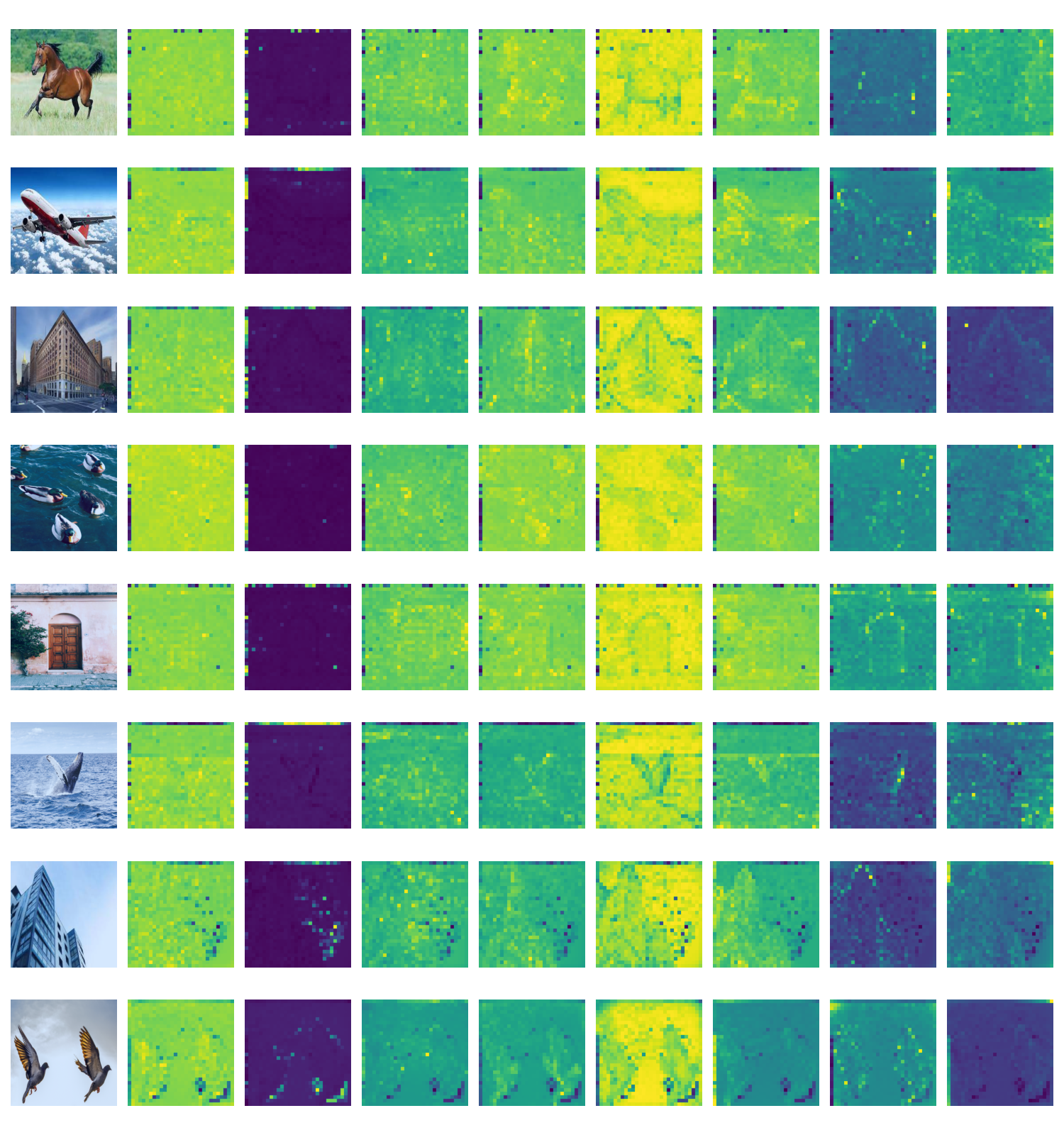}
    \end{subfigure}%
    ~
    \begin{subfigure}[t]{0.5\textwidth}
        \centering
    \includegraphics[width=0.95\textwidth]{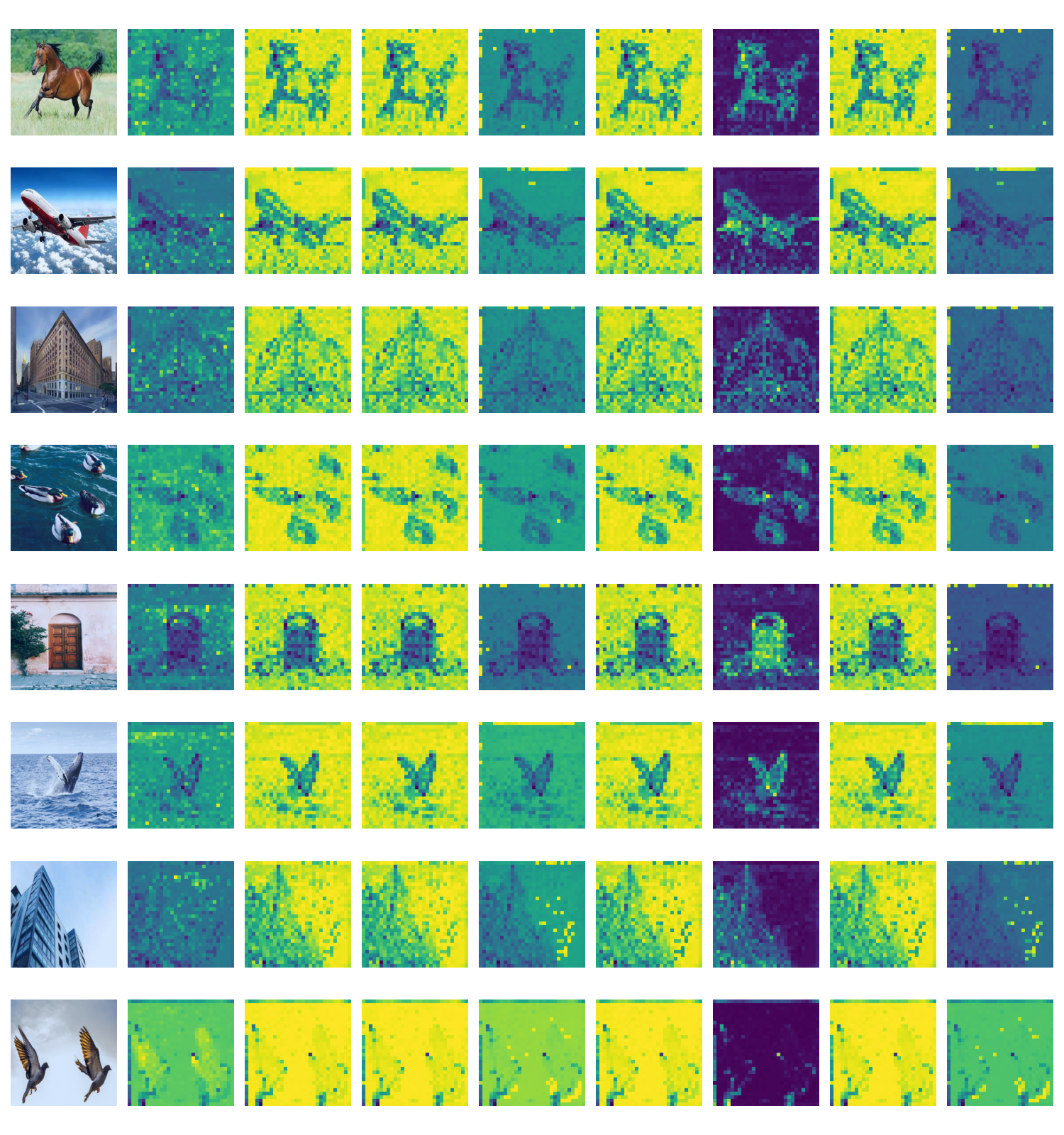}
    \end{subfigure}
  \caption{\small \textbf{Visualization of membership matrices $\P$ from \ours{-S}, estimated in layer 5 (\textit{left}) and 9 (\textit{right}) of each input image.}  \label{fig:appendix:pi_map}
  }
\end{figure}

\begin{figure}[H]
    \centering
    \includegraphics[width=\textwidth]{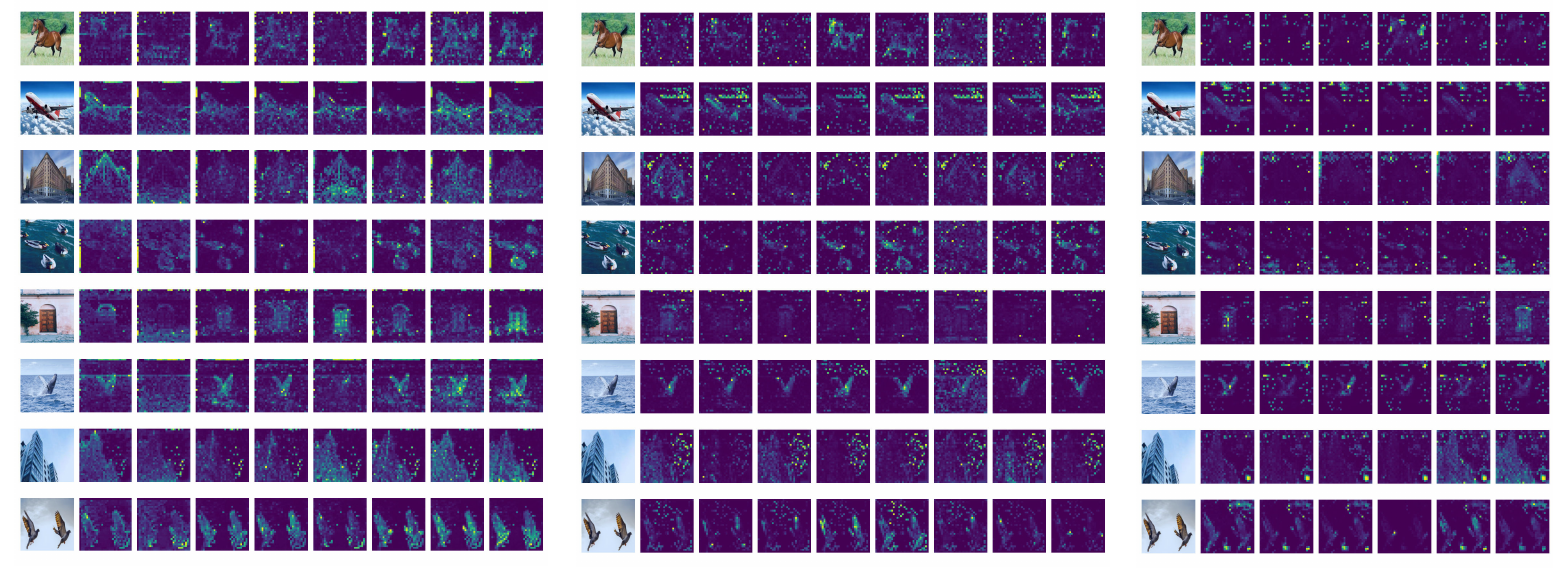}
    \caption{\small \textbf{Comparison of \texttt{[CLS]} token attention map visualization in the penultimate layer from \ours{-S} (\textit{left}), \xcit{}-S \citep{ali2021xcit} (\textit{middle}), and \vit-S \citep{touvron2021training} (\textit{right}).} Each row is a different image and each column is a different head. Note that the \vit-S model has $K=6$ heads instead of $K=8$.}
    \label{fig:appendix:cls_attn}
\end{figure}

\subsection{Visualizing Layer-wise Subspaces in TSSA:}
As detailed in~\Cref{subsec:prac_implementation}, we do not, in practice, strictly enforce the orthogonality of the $\mathbf{U}$ columns or require the direct sum of the subspaces defined by $\mathbf{U}_{[K]}$ to cover the entire $\mathbb{R}^p$ space. Instead, we make these projection matrices learnable via backprop for some target tasks (e.g. image classification). We now visualize the learned $\mathbf{U}_{[K]}^\ell$ matrices of different layers in the TSSA block of a trained \ours{-S} model on ImageNet-1k. %
In~\Cref{fig:covariance_matrices}, we first normalize the columns in each $\mathbf{U}_k^\ell$, then we visualize $[\mathbf{U}_1^\ell, \ldots, \mathbf{U}_K^\ell]^\top[\mathbf{U}_1^\ell, \ldots, \mathbf{U}_K^\ell] \in \mathbb{R}^{pK \times pK}$. The $(i,j)^\text{th}$ block in each sub-figure corresponds to $(\mathbf{U}_i^\ell)^\top\mathbf{U}_j^\ell$ for $i,j \in [K]$ at a particular layer $\ell$. For better visual clarity, we visualize each block by randomly picking 4 directions for each subspace $\mathbf{U}_i$. We observe that the learned $\mathbf{U}_{[K]}^\ell$ are indeed approximately incoherent, which aligns well with our assumptions.

\begin{figure}[H]
\centering\includegraphics[width=0.8\textwidth]{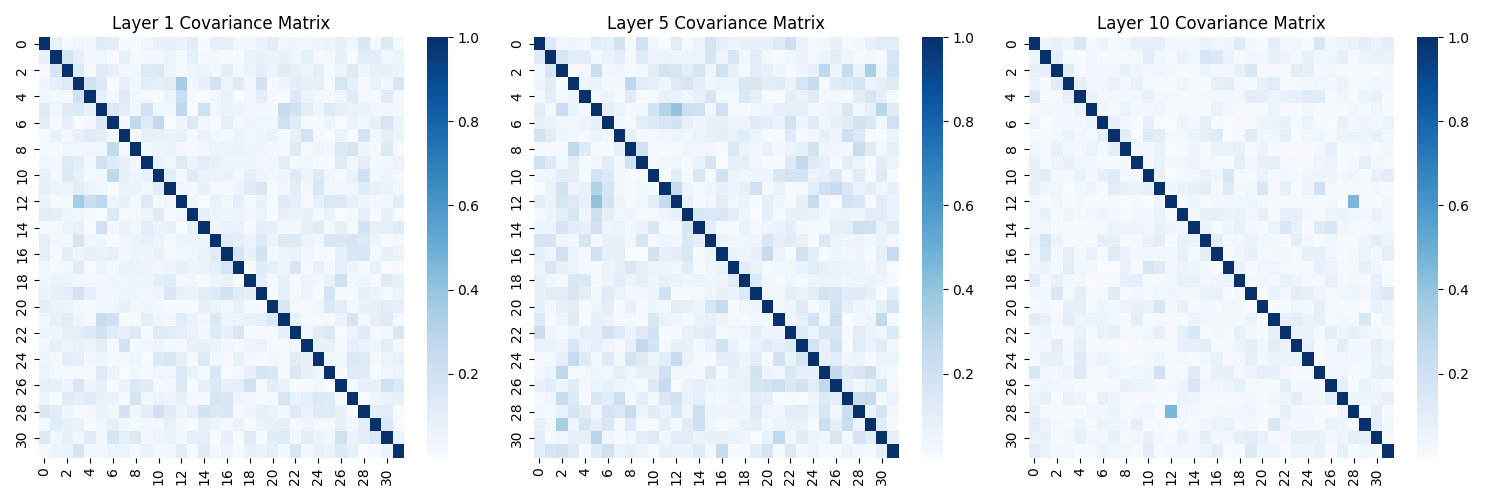}
\vspace{-1mm}
    \caption{\small \textbf{Visualization of learned subspaces in TSSA blocks at different layers.}}
    \vspace{-2mm}
    \label{fig:covariance_matrices}
\end{figure}

\section{Pseudocode of \ours{} Architectures} \label{sec:pseudocode}

See \Cref{alg:tost,alg:tssa,alg:mlp} on the following pages.

\begin{algorithm}
\caption{TSSA Attention Layer of \ours{} in \texttt{PyTorch}}
    \label{alg:tssa}
{\small
\lstinputlisting[language=Python]{code_listings/tssa.py}
    }
    \label{algo:attn-block}
\end{algorithm}
\begin{algorithm}
\caption{Causal Variant of TSSA Attention in \texttt{PyTorch}}
    \label{alg:tssa_causal}
{\small
    \lstinputlisting[language=Python]{code_listings/causal_tssa.py}
    }
    \label{algo:attn-block-causal}
\end{algorithm}
\begin{algorithm}
    \caption{MLP Layer of \ours{} in \texttt{PyTorch}}
    \label{alg:mlp}
    {\small
    \lstinputlisting[language=Python]{code_listings/mlp.py}
    }
    \label{algo:mlp-block}
\end{algorithm}

\begin{algorithm}
    \caption{Transformer Block of \ours{} in \texttt{PyTorch}}
    \label{alg:tost}
    {\small
    \lstinputlisting[language=Python]{code_listings/tost_block.py}
}
    \label{algo:tf-block}
\end{algorithm}

\newpage

\section{Limitations and Future Directions}
\label{app:limitations}
Due to limitations in computational resources and time, we have only tested and compared the new architecture up to medium sizes on the ImageNet 1K dataset. Although our \ours{} architecture enjoys higher computational efficiency, which is increasingly crucial at larger scales, it remains to verify in the future whether its accuracy remains competitive against other architectures in very large scale applications. %
Finally, the focus of this work is on improving the efficiency and scalability of the attention block of the transformer, so we have kept the MLP block unchanged. Preliminary ablation studies in \Cref{tab:ablations} suggest that the vanilla ISTA block used in \crate{} might not be optimal for \ours{}. We leave the study of designing a more effective, efficient, and white-box replacement for the MLP block for future.

\end{document}